%% file: main.tex
\documentclass{article}

\usepackage{microtype}
\usepackage{graphicx}
\usepackage{subcaption}
\usepackage{booktabs} 
\usepackage{xspace} 
\usepackage{tabularx}
\usepackage{thm-restate}

\usepackage[dvipsnames,table]{xcolor}
\definecolor{plotbg}{HTML}{E6E1D7}
\definecolor{DdLightBlue}{HTML}{7a90a6}
\definecolor{DdDarkBlue}{HTML}{18314b}
\definecolor{DdLightPurple}{HTML}{a59ead}
\definecolor{DdDarkPurple}{HTML}{675F74}
\definecolor{DdLightRed}{HTML}{c9a1a4}
\definecolor{DdDarkRed}{HTML}{744548}
\definecolor{DdLightYellow}{HTML}{E6E1D7}
\definecolor{DdDarkYellow}{HTML}{a08264}

\usepackage[colorlinks=true, 
linkcolor=DdDarkRed, 
urlcolor=DdDarkPurple, 
citecolor=DdLightRed, 
anchorcolor=DdDarkPurple,
backref=page]{hyperref}

\usepackage[accepted]{icml2026}

\usepackage{amsmath}
\usepackage{amssymb}
\usepackage{mathtools}
\usepackage{amsthm}
\usepackage[shortlabels]{enumitem}
\usepackage{nomencl}

\usepackage[capitalize,noabbrev]{cleveref}

\usepackage[textsize=tiny]{todonotes}
\usepackage{mdframed}
\usepackage{fontawesome5}
\usepackage{utfsym}
\usepackage{sillypage}
\usepackage[breakable]{tcolorbox}
\usepackage{tikz-cd}
\usepackage{notation}
\usepackage{nicefrac}

\newmdenv[
  leftline=true,
  topline=false,
  bottomline=false,
  rightline=false,
  linewidth=2pt,
  linecolor=darkgray,
  skipabove=\baselineskip,
  skipbelow=\baselineskip 
]{theorembox}

\newcounter{question}
\newtcolorbox{empheqboxed}{colback=plotbg, 
 colframe=white,
 width=1\textwidth,
 sharpish corners,
 top=1mm, 
 bottom=0pt,
 left=2pt,
 right=2pt
}

\newcommand{\cyclife}{\textsc{TC-Bench}\xspace}

\icmltitlerunning{The Perception–Physics Paradox: Probing Scientific Alignment with \cyclife}

\begin{document}


\twocolumn[
\icmltitle{The Perception–Physics Paradox: Probing Scientific Alignment with \cyclife}




\begin{icmlauthorlist}
\icmlauthor{Dingling Yao}{ista}
\icmlauthor{Andrea Polesello}{ista}
\icmlauthor{Adeel Pervez}{ista}
\icmlauthor{Caroline Muller}{ista}
\icmlauthor{Francesco Locatello}{ista}
\end{icmlauthorlist}

\icmlaffiliation{ista}{Institute of Science and Technology Austria}

\icmlcorrespondingauthor{Dingling Yao}{dingling.yao@ista.ac.at}

\icmlkeywords{Machine Learning, ICML}

\vskip 0.3in
]



\printAffiliationsAndNotice{}  



\begin{abstract}
\looseness=-1 While Vision Foundation Models (VFMs) excel at predictive tasks on satellite imagery, their performance can arise from visual correlations rather than underlying structural invariants, making even perception-based out-of-distribution accuracy a poor proxy for scientific utility. 
As a result, models may look correct without reasoning correctly—a discrepancy we term the Perception–Physics Paradox. 
To address this gap, we introduce scientific alignment as an implicit objective for representation learning in scientific domains. 
We study a principled, testable aspect of scientific alignment through structural isomorphism, which requires latent representations to uniquely identify physical systems up to a linear reparameterization. 
This perspective induces a hierarchy of necessary conditions and yields a systematic probing protocol for physical and causal interpretability. 
To operationalize this framework, we release \cyclife, a global, reproducible benchmark dataset with an automated construction pipeline for tropical cyclone research, and show that current VFMs rely on visual shortcuts that collapse in intense regimes, indicating that scientific alignment does not arise as a natural byproduct of scaling alone. We provide all evaluation code and the reproducible data construction pipeline at \href{https://github.com/CausalLearningAI/tc-bench}{https://github.com/CausalLearningAI/tc-bench}.
\end{abstract}

\section{Introduction}
\label{sec:introduction}
\looseness=-1 
Recent vision foundation models exhibit strong zero-shot performance across a wide range of visual benchmarks, spanning recognition, retrieval, and open-ended image and video understanding~\citep{wiedemer2025video,karypidis2024dino,simeoni2025dinov3,tschannen2025siglip,meng2024towards,radford2021learning,oquab2023dinov2}. These successes are often interpreted as evidence that such models acquire broadly transferable representations, motivating their application to scientific domains~\citep{moor2023foundation,hasson2025scivid, mai2023opportunities}.

\looseness=-1 
While prior work has established a strong foundation for applying vision models to scientific domains, evaluations have largely emphasized broad benchmarks and average-case performance~\citep{hasson2025scivid,kitamoto2023digital}.
For example, the Digital Typhoon dataset~\citep{kitamoto2023digital} provided a vital first benchmark for vision-based meteorology. However, tropical cyclone samples naturally concentrate around moderate,
stable regimes (e.g., minimum central pressure near 1000~hPa). As a result, strong average performance can reflect reliance on stable visual regularities, while masking failures in physically extreme regimes, which are often insufficiently resolved by aggregate evaluations~\citep{hasson2025scivid}.

To mitigate this limitation, the community has increasingly turned to out-of-distribution (OOD) generalization as a proxy for assessing whether models learn invariant, physically meaningful representations. However, not all OOD shifts are equally diagnostic.
In particular, shifts that primarily alter visual context—such as geographic region or agency-specific conventions—may leave the underlying physical axes insufficiently perturbed. As a result, such tests can give an overly optimistic picture of physical invariance, even when representations remain dominated by visual cues rather than meaningful physical coordinates (see~\cref{fig:paradox}).

\begin{tcolorbox}[colback=gray!5,colframe=black,title=The Perception--Physics Paradox]
\label{def:perception-physics-paradox}
A representation can generalize perceptually and predict physical targets, yet collapse the physical degrees of freedom required for scientific reasoning.
\end{tcolorbox}

\begin{figure*}
    \centering
    \includegraphics[width=0.95\textwidth]{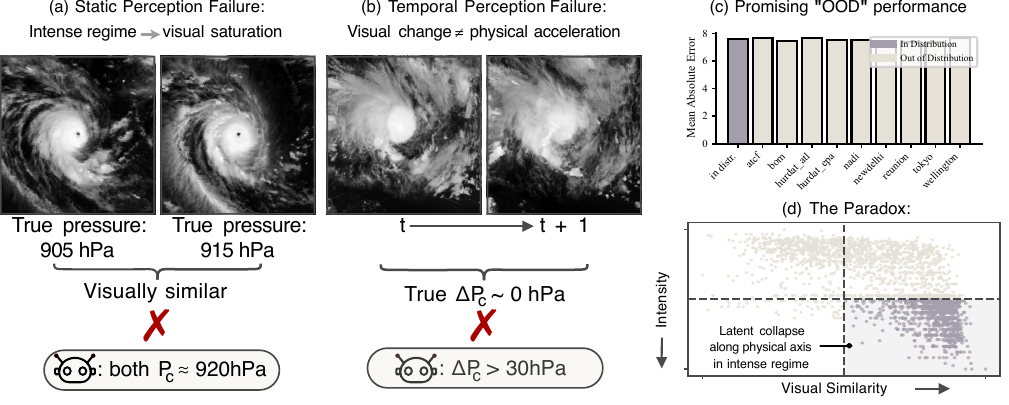}
    \caption{\looseness=-1\textbf{\textit{The Perception-Physics Paradox}}: (a) In intense regimes ($P_c < 980$ hPa), visually near-identical cyclones (e.g., 905 vs.\ 915 hPa) exhibit indistinguishable eye and cloud structures, causing vision models to map them to similar latent states and fail to resolve physically meaningful pressure differences. (b) Visually salient temporal changes (e.g., cloud expansion) can induce large predicted intensity shifts even when the true physical evolution is negligible (true $\Delta P_c = 0$ hPa vs.\ predicted $\Delta {P}_c > 30$ hPa).(c) Cross-agency evaluation—training on all but one meteorological agency and testing on the held-out agency—shows strong out-of-distribution performance, suggesting robust visual generalization. (d) Despite this perceptual robustness, latent representations collapse along critical physical axes in intense regimes, demonstrating that perceptual generalization alone does not guarantee physically meaningful representation structure.}
    \label{fig:paradox}
\end{figure*}

\looseness=-1 
To systematically assess this tension, we introduce \emph{scientific alignment} as a high-level goal for representation learning in scientific tasks.
Because this concept is inherently broad, we focus on a principled and testable aspect: \emph{structural isomorphism}~\pcref{def:isomorphism}.
A representation is structurally isomorphic to a physical system if it identifies the system up to a linear reparameterization, allowing physical state variables to be encoded in a distributed—though not necessarily disentangled—manner that remains coherent across regimes and system evolution.
This formulation induces a hierarchy of necessary conditions, yielding a systematic probing protocol that operationalizes scientific alignment as a set of falsifiable hypotheses about the physical and causal structure encoded in learned representations.

\looseness=-1 
To probe these necessary conditions in a realistic and high-stakes setting, we consider \emph{tropical cyclones}\footnote{``Tropical cyclones" are intense, rotating storms that develop over the warm waters of the tropical oceans. They are classified by wind speed as depressions, storms, or cyclones ($\ge$118 km/h), with the latter regionally termed ``hurricanes" in North Atlantic Ocean or ``typhoons" in the Western Pacific~\citep{emanuel2003TCs}.}
as a diagnostic case. Tropical cyclones are governed by well-defined physical state variables, such as minimum central pressure and maximum sustained wind, which directly determine their destructive potential \citep{bell2000ACE}. 
 At the same time, these variables are not directly observable at scale: infrared satellite imagery is abundant and inexpensive, whereas \textit{In-situ} measurements of atmospheric variables are often sparse, especially over the oceans, while satellites provide full coverage and high-resolution data \citep{balsamo2018satellite}. 
 This mismatch makes cyclones a natural setting in which to ask whether visual representations encode physically meaningful states, rather than merely appearance-level correlates.

\looseness=-1
To this end, we introduce \cyclife, to our knowledge the first \emph{versioned, global} tropical cyclone benchmark dataset with a unified and reproducible data construction pipeline. By addressing long-standing issues of geographic coverage, maintenance, and reproducibility in existing resources~\citep{knapp2007new,knapp2008calibration,kitamoto2023digital}, \cyclife provides a foundation for future data-driven studies of cyclone dynamics and prediction.
In this work, however, we do not position \cyclife as a leaderboard-style benchmark for optimizing predictive performance. Instead, we use it as a \emph{diagnostic testbed} to probe representation quality under \emph{physically targeted stress}. In contrast to perception-based OOD tests that primarily vary visual context or acquisition conditions, we explicitly evaluate representations along critical physical axes, most notably cyclone intensity. We ask whether leading vision foundation models encode latent spaces that
are structurally isomorphic to the underlying physical state variables, particularly in intense regimes where visual cues saturate but physical variation remains substantial.
This use of \cyclife disentangles perceptual robustness from the structural requirements needed for scientific reasoning, enabling falsifiable tests of where and how learned representations fail to support physically grounded inference.
We summarize our contributions as follows:

\begin{itemize}[leftmargin=*, itemsep=0.3em]
    \item \looseness=-1\textbf{Scientific Alignment as a Diagnostic Lens.} We articulate the \emph{Perception--Physics Paradox}, highlighting a systematic gap between perceptual robustness and the utility of learned representations as physical state variables. To study this gap, we introduce \emph{scientific alignment} (\cref{sec:scientific-alignment}) as an evaluative lens and formalize a set of \emph{necessary, falsifiable conditions} through the notion of \emph{structural isomorphism}. This perspective characterizes when representations are consistent with serving as structural surrogates of physical systems, rather than relying solely on appearance-level correlations.

    \item \looseness=-1\textbf{A Global Benchmark for Tropical Cyclone Research.} We introduce \cyclife, a versioned, \textbf{global} tropical cyclone benchmark dataset with a standardized and reproducible data construction
    pipeline (\cref{sec:cyclife_dataset}).
    By integrating multi-agency metadata and satellite observations across
    all major basins, \cyclife supports long-term, reproducible studies of cyclone dynamics. Crucially, it explicitly includes intense regimes and life-cycle transitions, enabling physically targeted representational stress tests that are invisible under conventional average-case evaluations.

    \item \looseness=-1\textbf{A Unified Probing Framework for Scientific Representations.} We propose a systematic probing framework derived from the necessary conditions of scientific alignment (\cref{sec:scientific-alignment}). Applying these probes to leading vision foundation models, we show that representations that perform well under standard in-distribution and OOD benchmarks can nonetheless collapse in physically extreme regimes, where visual similarity masks meaningful physical variation. Together, these probes form a principled diagnostic toolkit for localizing \emph{where} and \emph{how} learned representations fail to support physically grounded scientific reasoning.
\end{itemize}

\looseness=-1 \textbf{Scope and Positioning.}
\textit{Our goal is diagnostic rather than prescriptive.} We do not claim that vision foundation models are inherently incapable of learning physically aligned representations, nor that nonlinear probes or task-specific training cannot recover useful physical information. Rather, we show that commonly used evaluation protocols---based on average-case performance or perception-driven OOD shifts---are insufficient to certify alignment with respect to specific scientific queries. To make this failure mode measurable, we introduce query-aware, physically grounded probes that provide falsifiable tests of representational structure. Under these tests, frozen VFM representations fail to satisfy even weak necessary conditions of structural alignment in physically extreme regimes. TC-BENCH therefore serves as a controlled stress test for a broader class of settings in which observations remain visually coherent while becoming locally insensitive to physically important variation.



\section{The Scientific Alignment Framework}
\label{sec:scientific-alignment}
\looseness=-1 Deep learning has proven remarkably effective at learning compressed
representations of complex data; however, in scientific domains, compression alone is often insufficient~\citep{roscher2020explainable}.
For a model to be useful for discovery, its  internal representations must satisfy at least minimal requirements of \textbf{\emph{scientific alignment}}, which we interpret here through one concrete, testable aspect: whether the learned latent space is \textit{structurally isomorphic} to the physical state space \pcref{def:isomorphism}.
While a universal definition of \emph{scientific alignment} may be
challenging to provide, we propose a set of necessary conditions that focus on structural and functional correspondence between a learned encoding and its target physical system~\citep{swoyer1991structural}.
Under this view, an aligned representation acts as a \emph{structural surrogate}~\citep{cummins1989meaning}, preserving physically meaningful variation across regimes.
We proceed by formalizing structural isomorphism, deriving its uniform
implications, and introducing empirical probes that estimate the corresponding residuals, culminating in an interventional consistency
guarantee (\Cref{thm:causal-grounding}).

\begin{definition}[Physical System]
\label{def:physical-system}
A physical system $\mathcal{S}$ represents a governing process $\yb \in \mathcal{Y} \subset \RR^m$ observed as data $\xb \in \mathcal{X}$ across diverse regimes $\mu \in \mathcal{M}$. It is defined by:
\begin{enumerate}[(i), topsep=0em, itemsep=0em]
    \item \textit{Governing Dynamics}: A vector field $\mathbf{f}_{\mu}$ governing
    system evolution, written as
    $\dot{\mathbf{y}} = \mathbf{f}_{\mu}(\mathbf{y})$, with discrete-time
    observations treated as finite differences.
    \item \textit{Observation Process}: A mapping $\phi$ where $\mathbf{\xb} = \phi(\mathbf{y})$, defining how we measure the state.
    \item \textit{Invariants}: A set of constraints $\mathcal{P}(\mathbf{y}) = 0$ (e.g., conservation laws) that define the valid physical manifold.
\end{enumerate}
Each regime $\mu \in \mathcal{M}$ induces a data distribution $\mathcal{D}_\mu$
over trajectories and observations through the governing dynamics
$\mathbf{f}_\mu$ and observation process $\phi$.
All properties of $\mathcal{S}$ are required to hold uniformly across the
regime family $\mathcal{M}$.
\end{definition}

\begin{definition}[Representation]
\label{def:representation}
\looseness=-1 Let $g:\mathcal X\to\mathcal Z \subset \RR^k$ be a learned encoder mapping observations to a latent space; we define $\zb=g(\xb)$ as the \emph{learned representation} of the observation $\xb \in \Xcal$.
\end{definition}

\subsection{Formalism: Structural Isomorphism}
While Scientific Alignment represents a broad epistemic goal, i.e., ensuring that
a model's internal logic corresponds to physical reality—it requires a formal interpretation to be empirically testable.
Accordingly, we examine one concrete aspect of scientific alignment: whether a learned representation exhibits \textit{structural
isomorphism} with the physical system, which we operationalize as follows:
\begin{definition}[Structural Isomorphism]
\label{def:isomorphism}
A representation $\zb = g(\xb) \in \Zcal$ is \emph{structurally isomorphic} to a physical system $\Scal$
if there exists an injective linear map $\Ab \in \mathbb{R}^{d\times m}$ such that, for all regimes $\mu\in\Mcal$,
\begin{equation}
\zb = \Ab \yb + \epsilon_\mu(\yb),
\end{equation}
where the residual $\epsilon_\mu(\yb)$ is uniformly bounded in magnitude
and variation:
\begin{equation}
\begin{aligned}
\sup_{\mu}\ 
\mathbb{E}_{\yb\sim \mathcal{D}_\mu}\!\left[\|\epsilon_\mu(\yb)\|\right] 
&\le \bar\epsilon,\\
\sup_{\mu}\ 
\mathbb{E}_{\yb\sim \mathcal{D}_\mu}\!\left[\|J_{\epsilon_\mu}(\yb)\|\right] 
&\le \bar\delta,
\end{aligned}
\end{equation}
where $J_{\epsilon_\mu}(\yb) := \frac{\partial \epsilon_\mu}{\partial \yb}$ and $\|\cdot\|$ denotes the operator norm.
\end{definition}

\begin{remark}[\textbf{\textit{Distributed Structural Identifiability}}]
    \looseness=-1 Under this criterion, a representation $g$ is aligned if its latent space identifies the physical system $\mathcal{S}$ as a unique, linearly reparameterized subspace.
    Unlike traditional CRL, which often requires coordinate-wise disentanglement~\citep{scholkopf2021toward,von2024nonparametric}, structural isomorphism allows for a \textit{distributed representation}. 
    Even if there is no element-wise correspondence between $\mathbf{z}$ and the physical state variables $\mathbf{y}$,~\cref{def:isomorphism} ensures that $g$ identifies a unique physical system $\mathcal{S}$ within a linear subspace. 
    This form of structural preservation reflects sensitivity to the geometry of the governing dynamics $\mathbf{f}_{\mu}$ rather than reliance on superficial statistical correlations.
\end{remark}

\begin{remark}[\textbf{\textit{Interpretation and Scope}}]
The linear reparameterization in \Cref{def:isomorphism} does not assume
linear physics, nor does it imply that deep representations are globally
linear. Rather, it tests whether the latent space preserves a coordinate system
that is \emph{locally consistent across the regimes under study}. Structural isomorphism should therefore be understood as an approximate
and falsifiable hypothesis about latent geometry: failure indicates misalignment, while success constitutes a necessary—but not sufficient—condition for scientific alignment.
\end{remark}

\begin{restatable}[Uniform Implications of Structural Isomorphism]{proposition}{implications}
\label{prop:implications}
Given a physical system $\Scal$ with bounded vector field $\|\mathbf{f}_{\mu} (\yb)\| \le K$ for all regime $\mu \in \Mcal$, if a representation $\zb = g(\xb)$ is \emph{structurally isomorphic} to $\Scal$~\pcref{def:isomorphism} and the constraints $\Pcal$ in $\Scal$ are $\Lambda_{\Pcal}$-Lipschitz continuous,
then there exists a linear decoder $L \in \mathbb{R}^{m \times d}$ (e.g.\ a left inverse of $\Ab$) such that, uniformly over the regime family $\Mcal$,
the following hierarchical properties hold:
\begin{enumerate}[(i), topsep=0.4em, itemsep=0.4em]
    \item \textit{Static Fidelity.}
    The physical state is uniformly recoverable from the representation:
    \begin{equation}
        \sup_{\mu \in \Mcal}
        \mathbb{E}_{\yb \sim \mathcal{D}_\mu}
        \big[ \|\yb - L \zb\| \big]
        \;\le\; \|L\| \bar\epsilon =:\epsilon_{\mathrm{stat}} .
    \end{equation}

    \item \textit{Dynamic Coherence.}
    The decoded latent derivatives $\dot \zb$ uniformly approximate the physical derivatives $\dot \yb$:
    \begin{equation}
        \sup_{\mu \in \Mcal}
        \mathbb{E}_{(\yb,\dot\yb)\sim \mathcal{D}_\mu}
        \| \dot \yb - L\, \dot \zb \| 
        \;\le\; \|L\| \bar\delta K =: \epsilon_{\mathrm{dyn}} ,
    \end{equation}
    where $L$ is the same decoder as in~(i).

    \item \textit{Manifold Consistency.}
    The linear reconstruction approximately satisfies the system invariants $\mathcal{P}$:
    \begin{equation}
        \sup_{\mu \in \Mcal}
        \mathbb{E}_{\yb \sim \mathcal{D}_\mu}
        \big[ \|\mathcal{P}(L \zb)\| \big]
        \;\le\; \Lambda_{\mathcal{P}} \|L\| \bar{\epsilon}=: \epsilon_{\mathrm{con}} .
    \end{equation}
\end{enumerate}
\end{restatable}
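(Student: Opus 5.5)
The plan is to take $L$ to be a left inverse of $\Ab$, which exists because $\Ab$ is injective (e.g.\ the Moore--Penrose pseudoinverse $L=(\Ab^\top\Ab)^{-1}\Ab^\top$), so that $L\Ab = I_m$. All three bounds then follow from the decomposition $\zb=\Ab\yb+\epsilon_\mu(\yb)$ in \Cref{def:isomorphism}, applied pointwise and then integrated against $\mathcal{D}_\mu$. Each bound comes out independent of $\mu$, so taking $\sup_\mu$ at the end is immediate. Throughout, I read $\mathcal{D}_\mu$ as a distribution over $\yb$ (and $(\yb,\dot\yb)$), with $\zb=g(\phi(\yb))$.

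For \emph{Static Fidelity}, apply $L$ to the decomposition to get $L\zb = L\Ab\yb + L\epsilon_\mu(\yb) = \yb + L\epsilon_\mu(\yb)$. Hence $\|\yb-L\zb\| \le \|L\|\,\|\epsilon_\mu(\yb)\|$. Taking expectations and using the first uniform bound gives $\|L\|\bar\epsilon$.

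\emph{Manifold Consistency} follows the same route. Because $\yb$ lies on the valid manifold, $\mathcal{P}(\yb)=0$, so
\[
\|\mathcal{P}(L\zb)\| = \|\mathcal{P}(L\zb)-\mathcal{P}(\yb)\| \le \Lambda_{\mathcal{P}}\|L\zb-\yb\|.
\]
Chaining this with the pointwise bound from (i) and taking expectations yields $\Lambda_{\mathcal{P}}\|L\|\bar\epsilon$.

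For \emph{Dynamic Coherence}, differentiate along trajectories. Assuming $\epsilon_\mu$ is differentiable in $\yb$ (implicit in \Cref{def:isomorphism}, since $J_{\epsilon_\mu}$ appears there) and the regime is fixed along a trajectory, the chain rule gives
\[
\dot\zb = \Ab\dot\yb + J_{\epsilon_\mu}(\yb)\dot\yb.
\]
Applying $L$ gives $L\dot\zb = \dot\yb + L J_{\epsilon_\mu}(\yb)\,\mathbf{f}_\mu(\yb)$. Therefore
\[
\|\dot\yb-L\dot\zb\| \le \|L\|\,\|J_{\epsilon_\mu}(\yb)\|\,\|\mathbf{f}_\mu(\yb)\| \le \|L\|\,K\,\|J_{\epsilon_\mu}(\yb)\|,
\]
where the last step uses the uniform bound $\|\mathbf{f}_\mu\|\le K$. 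Taking expectations and using the Jacobian bound $\bar\delta$ gives $\|L\|\bar\delta K$. The same $L$ is used in all three parts, as the statement requires.

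I expect the main obstacle to be justifying this derivative step rather than the algebra. It requires $\epsilon_\mu(\yb(t))$ to be differentiable along trajectories with $\mu$ held fixed, and it requires $\mathcal{D}_\mu$ over $(\yb,\dot\yb)$ to be supported on $\dot\yb=\mathbf{f}_\mu(\yb)$. If only discrete-time finite differences are available, as \Cref{def:physical-system} allows, I would instead use the mean value inequality: $\|\epsilon_\mu(\yb_{t+1})-\epsilon_\mu(\yb_t)\|$ is controlled by the Jacobian norm along the segment times $\|\Delta\yb\|\le K\Delta t$. That version needs the Jacobian bound along segments rather than only in expectation, so I would state it as a mild additional assumption. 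The remaining parts are routine norm inequalities.
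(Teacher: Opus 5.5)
Your proposal is correct and follows essentially the same route as the paper's proof: take $L$ to be the Moore--Penrose left inverse so that $L\Ab = \mathbf{I}_m$, then apply $L$ to $\zb = \Ab\yb + \epsilon_\mu(\yb)$ for static fidelity, differentiate via the chain rule and use $\|\dot\yb\|\le K$ for dynamic coherence, and use $\mathcal{P}(\yb)=0$ with Lipschitz continuity for manifold consistency. The regularity caveats you flag are assumptions the paper also relies on, but only implicitly: differentiability of $\epsilon_\mu$ along trajectories with $\mu$ held fixed, and $\dot\yb = \mathbf{f}_\mu(\yb)$ on the support of $\mathcal{D}_\mu$.
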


\subsection{Operationalization: Structural Alignment Probes}
\Cref{prop:implications} characterizes the necessary, error-bounded consequences of structural isomorphism at the population level.
In practice, however, structural isomorphism is only approximate and cannot be verified directly. This motivates a family of empirical diagnostics that estimate the
tightest achievable residuals under restricted surrogate models. To this end, we introduce the \textit{Structural Alignment Probe}:
\begin{definition}[Structural Alignment Probe]
\label{def:alignment-probe}
A \textit{Structural Alignment Probe} is a formal diagnostic quadruplet $\mathcal{Q}=(\mathcal{Z}, \mathcal{R}, \mathcal{H}, \psi)$ that interrogates a learned representation $\zb = g(\xb)$ by seeking a restricted-capacity surrogate $h \in \mathcal{H}$ that minimizes the \textit{worst-case discrepancy} across the regime family $\mathcal{M}$. The probe value $V(g)$ is defined as the uniform residual:
\begin{equation}
V(g) = \inf_{h \in \mathcal{H}} \sup_{\mu \in \mathcal{M}} \mathbb{E}_{(\zb,\mathbf r)\sim \Dcal_\mu} \left[ \psi(h(\zb), \mathbf{r}) \right]
\end{equation}
where $h(\zb)$ is the predicted variable, $\mathbf{r} \in \mathcal{R}$ is the physical reference, $\psi$ is the error metric. We say that $g$ passes the probe $\mathcal{Q}$ at tolerance $\epsilon$ if
$V(g) \le \epsilon$.
\end{definition}

Conceptually, a structural alignment probe quantifies how well a learned
representation supports the error bounds implied by structural
isomorphism when accessed through a restricted surrogate class $\mathcal{H}$. Instantiating this definition using the implications in \Cref{prop:implications} yields a hierarchy of probes targeting state recoverability, dynamical coherence, and consistency with known physical
constraints.
Together, these probes act as \textit{falsifiable tests} of whether a latent
representation is consistent with preserving the geometry of the
physical state manifold.

\begin{table}[t]
\caption{\textbf{Structural Alignment Hierarchy.} Each probe $\mathcal{Q} = (\mathcal{Z}, \mathcal{R}, \mathcal{H}, \psi)$ measures a uniform residual $\epsilon$ across regimes $\mathcal{M}$, serving as a necessary condition for scientific alignment. In practice, we approximate the derivatives in $\mathcal{Q}_{\text{dyn}}$ with finite difference over fixed time span: $\Delta \zb := L(\mathbf{z}_{t+1} - \mathbf{z}_t), \Delta \yb = \yb_{t+1} - \yb_t$.}
\label{tab:alignment-hierarchy}
\vskip 0.15in
\begin{center}
\begin{small}
\begin{sc}
\begin{tabular}{llccc}
\toprule
\rowcolor{Gray!20} Probe & Ref. $\mathcal{R}$ & Surr. $\mathcal{H}$ & Metric $\psi(\cdot)$ \\
\midrule
\textcolor{DdDarkBlue}{$\mathcal{Q}_{\text{stat}}$}  & $\mathbf{y}$ & $L$ & $\|\mathbf{y} - L\mathbf{z}\|$ \\
\addlinespace[0.6em]
\textcolor{DdDarkYellow}{$\mathcal{Q}_{\text{dyn}}$}  & $d\mathbf{y}$ & $L$ & $\|\Delta \mathbf{y} - L \Delta\mathbf{z}\|$ \\
\addlinespace[0.6em]
\textcolor{DdDarkPurple}{$\mathcal{Q}_{\text{con}}$}  & $\mathcal{P}$ & $L$ & $\|\mathcal{P}(L\mathbf{z})\|$ \\
\bottomrule
\end{tabular}
\end{sc}
\end{small}
\end{center}
\vskip -0.1in
\end{table}
\subsection{From Structural Isomorphism to Causal Reasoning}

\begin{restatable}[Interventional Grounding]{theorem}{causalImplication}
\label{thm:causal-grounding}
Let $\mathcal{Y}$ be a compact manifold and assume a bounded physical vector field $\|\mathbf{f}_\mu\| \le K$ for all $\mu \in \Mcal$.
Let $g: \Xcal \to \Zcal$ be an encoder that satisfies the structural
alignment hierarchy (\Cref{prop:implications}) with uniform
residuals $\{\epsilon_{\text{stat}}, \epsilon_{\text{dyn}}\}$ under a
linear decoder $L$.
Let $G = g \circ \phi$ denote the induced mapping from the
state space $\Ycal$ to the latent space $\Zcal$.
Then, for a hard intervention $\text{do}(\mathbf{y}^*)$ applied at time
$t^*$, the $n$-step interventional rollout satisfies
\begin{equation}
    \| L \Delta^n G(\mathbf{y}^*) - \Delta^n(\mathbf{y}^*) \|
    \;\leq\; \epsilon_{\text{int}}(n),
\end{equation}
where $\Delta^n(\mathbf{y}^*) = \mathbf{y}_{t_n} - \mathbf{y}_{t^*}$
denotes the physical evolution over the interval $[t^*, t_n]$.
The cumulative intervention error $\epsilon_{\text{int}}(n)$ is bounded as:
\begin{equation}
    \epsilon_{\text{int}}(n) \leq \epsilon_{\text{stat}} + \epsilon_{\text{dyn}} (t_n - t^*).
\end{equation}
\end{restatable}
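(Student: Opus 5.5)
The plan is to read the $n$-step interventional rollout as a telescoping sum of one-step finite differences. Each increment is then controlled by the \emph{Dynamic Coherence} bound of \Cref{prop:implications}, while the \emph{Static Fidelity} bound supplies an anchoring term at the intervention time.

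Concretely, after the hard intervention $\text{do}(\mathbf{y}^*)$ at $t^*$, the system evolves under $\dot{\mathbf{y}} = \mathbf{f}_\mu(\mathbf{y})$ from $\mathbf{y}_{t^*} = \mathbf{y}^*$. Because the intervention only resets the state and leaves $\mathbf{f}_\mu$ and $\phi$ unchanged, the post-intervention trajectory is again a trajectory of $\mathcal{S}$ in some regime $\mu$. The uniform-over-$\mathcal{M}$ guarantees of \Cref{prop:implications} therefore apply to it; this is where the compactness of $\mathcal{Y}$ and the bound $\|\mathbf{f}_\mu\|\le K$ enter, since they keep the trajectory in the domain where the residual bounds hold.

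The first step is to split the error. Write $\mathbf{z}_t = G(\mathbf{y}_t)$, so that $L\Delta^n G(\mathbf{y}^*) = L(\mathbf{z}_{t_n} - \mathbf{z}_{t^*})$. Then
\begin{equation*}
L(\mathbf{z}_{t_n}-\mathbf{z}_{t^*}) - (\mathbf{y}_{t_n}-\mathbf{y}_{t^*}) = \big(L\mathbf{z}_{t^*}-\mathbf{y}_{t^*}\big)\cdot 0 + \int_{t^*}^{t_n} \big(L\dot{\mathbf{z}}_s - \dot{\mathbf{y}}_s\big)\,ds .
\end{equation*}
Equivalently, in discrete form one sums $L\Delta\mathbf{z} - \Delta\mathbf{y}$ over the $n$ steps. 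By the triangle inequality the integral term is at most $\int_{t^*}^{t_n}\|L\dot{\mathbf{z}}_s-\dot{\mathbf{y}}_s\|\,ds$, and Dynamic Coherence bounds its integrand by $\epsilon_{\mathrm{dyn}}$, giving the contribution $\epsilon_{\mathrm{dyn}}(t_n-t^*)$. The $\epsilon_{\mathrm{stat}}$ term is then added as slack. It accounts for the mismatch between the intervened initial latent $G(\mathbf{y}^*)$ and its decoding $L G(\mathbf{y}^*)$ versus $\mathbf{y}^*$, which is needed if one instead compares the endpoint $L\mathbf{z}_{t_n}$ to the anchored quantity $\mathbf{y}^* + \Delta^n$. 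Static Fidelity bounds this by $\epsilon_{\mathrm{stat}}$. Adding the two pieces gives $\epsilon_{\mathrm{int}}(n)\le \epsilon_{\mathrm{stat}}+\epsilon_{\mathrm{dyn}}(t_n-t^*)$.

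The main obstacle is that \Cref{prop:implications} gives only \emph{expectation} bounds over $\mathcal{D}_\mu$, whereas the theorem asserts a bound along a single intervened trajectory. My first attempt would be to read the statement in expectation over the post-intervention distribution, which is one of the $\mathcal{D}_\mu$ by the argument above. Fubini then exchanges the time integral with the expectation, so that $\mathbb{E}\int\|L\dot{\mathbf{z}}-\dot{\mathbf{y}}\| = \int \mathbb{E}\|\cdot\| \le \epsilon_{\mathrm{dyn}}(t_n-t^*)$. If a pointwise statement is required, I would instead strengthen the residual bounds to hold pointwise (sup in place of expectation), using compactness of $\mathcal{Y}$ and continuity of $\epsilon_\mu, J_{\epsilon_\mu}$. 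Under that strengthening the same telescoping argument goes through verbatim.
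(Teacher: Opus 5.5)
Your argument is essentially the paper's proof. Both anchor at $t^*$ with Static Fidelity, write the two displacements as time integrals, and bound $\int_{t^*}^{t_n}\|L\dot{\mathbf{z}}_\tau-\dot{\mathbf{y}}_\tau\|\,d\tau$ by $\epsilon_{\mathrm{dyn}}(t_n-t^*)$ via the triangle inequality, with $\epsilon_{\mathrm{stat}}$ added only as slack (the paper's step (III) likewise concedes that $\Delta^n$ is a displacement, so the initial error cancels). The paper simply asserts the pointwise bound $\|L\dot{\mathbf{z}}-\dot{\mathbf{y}}\|\le\epsilon_{\mathrm{dyn}}$ ``uniformly over the trajectory,'' which is your second, strengthened reading; your Fubini route would not close the gap, because $\mathrm{do}(\mathbf{y}^*)$ yields a single deterministic trajectory whose time-marginals are point masses, not one of the $\mathcal{D}_\mu$ over which \Cref{prop:implications} bounds expectations.
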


\looseness=-1 \textbf{Implication.}
\Cref{thm:causal-grounding} establishes a concrete link between structural
alignment and interventional causal consistency.
If static fidelity and dynamic coherence
(\Cref{prop:implications}) hold with uniform bounds, then interventions
applied directly in the physical state space admit latent rollouts whose
decoded trajectories remain physically plausible over multiple steps.
Importantly, this guarantee concerns \emph{intervention-consistent}
rollouts under physically realizable perturbations of the system state.
Moreover, this result does \emph{not} require coordinate-wise disentanglement of the latent representation.
Instead, it relies solely on the unique identifiability of the physical
system up to a linear reparameterization.
Accordingly, \emph{distributed representations} that preserve the underlying system structure are consistent with supporting reliable
reasoning under intervention.
In this sense, structural alignment provides a minimal and testable set
of necessary conditions for world models intended for scientific use.

\begin{remark}[\textbf{\textit{On Counterfactual Reasoning}}]
\looseness=-1 Counterfactual reasoning is often regarded as the strongest level in causal hierarchies~\citep{pearl2009causality}.
This work deliberately does \emph{not} claim counterfactual reasoning, but rather focuses on \emph{interventional} queries on physically realizable states.
In scientific domains, counterfactual states are rarely observable and
often ill-defined (see p. 220,~\citet{pearl2009causality}), whereas interventions on physical variables are
meaningful and empirically grounded.
Our framework therefore targets a level of causal reasoning that is both
testable and operational in real-world scientific systems.
\end{remark}

\section{The \cyclife Dataset}
\label{sec:cyclife_dataset}

\looseness=-1
Tropical cyclones provide a scientifically rich test case for studying learned representations: their evolution is governed by a small set of well-established physical variables, yet these variables are only indirectly observable through high-dimensional satellite imagery.
We introduce \cyclife, a \emph{versioned, global} tropical cyclone benchmark dataset with a \textit{unified and fully reproducible construction pipeline}.
Unlike prior benchmarks that require manual curation or periodic maintenance~\citep{knapp2007new,knapp2008scientific}, \cyclife enables researchers to automatically reconstruct up-to-date global cyclone datasets across all basins, substantially extending existing resources in both lifecycle coverage and geographic diversity \citep{kitamoto2023digital,kitamoto2024machine}.

\looseness=-1
The version of \cyclife used in this study spans global tropical cyclones
from 1980 to 2024 and integrates two complementary data sources: (1) the International Best Track Archive for Climate Stewardship (IBTrACS v4r01)~\citep{knapp2010ibtracs,gahtan2024ibtracs}, which provides storm metadata including minimum central pressure, maximum sustained wind, and storm location across multiple reporting agencies; and (2) geostationary infrared brightness temperature imagery from GridSat-B1~\citep{knapp2014noaa}, from which we extract fixed-size crops centered at reported storm locations.
As of 2024, this pipeline yields 3,928 raw tropical cyclones across seven agencies, yielding a curated benchmark dataset with truly global basin coverage.
Construction details, preprocessing, and summary statistics are reported in~\cref{app:cyclife}.

\begin{figure*}
    \centering
    \includegraphics[width=\linewidth]{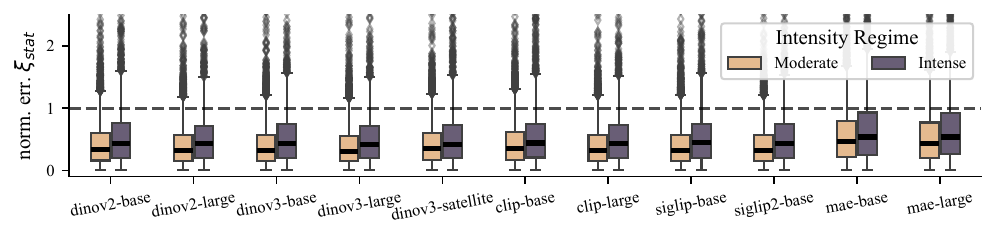}
    \caption{\textit{Static Fidelity ($\Qcal_{\text{stat}}$) and Regime Degradation}: Normalized absolute error ($\xi_{\text{stat}}$) of state estimation across diverse VFM architectures, stratified by intensity.
    A consistent performance gap is observed across all model families: the \textcolor{DdDarkPurple}{Intense regime} exhibits significantly higher median errors and variance compared to the \textcolor{DdDarkYellow}{Moderate regime}. This confirms the Visual Saturation hypothesis, where representations struggle to resolve fine-grained intensity differences in the high-intensity tail due to feature saturation.}
    \label{fig:static_fidelity}
\end{figure*}

\looseness=-1
Within the scope of this paper, \cyclife serves as a controlled evaluation substrate for probing \emph{scientific alignment}.
Following the Digital Typhoon convention~\citep{kitamoto2023digital}, the observation space $\mathcal{X}$ consists of two-dimensional infrared
satellite images, while the physical state space $\mathcal{Y}$ is defined by minimum central pressure ($P_c$) and maximum sustained wind speed ($V_m$).
Because wind estimates are subject to substantial reporting
heterogeneity across agencies (e.g., 1-minute vs.\ 10 minute averaging), our primary analyses focus on $P_c$, with complementary results involving $V_m$ reported in~\cref{app:experiments}.

\section{Probing for Scientific Alignment}
\label{sec:probing_scientific_alignment}

\looseness=-1
This section introduces a structured probing framework for assessing \emph{scientific alignment} in learned representations, following the hierarchy of scientific queries summarized in~\cref{tab:alignment-hierarchy}.
We evaluate alignment progressively along three dimensions.
First, we test \textit{Static Fidelity} and \textit{Dynamic Coherence}, which assess whether physical state variables (minimum central pressure $P_c$) and their temporal evolution are linearly identifiable from the latent representation. Second, we examine \textit{Manifold Consistency} ($Q_{\mathrm{con}}$), asking whether independently recovered physical variables ($P_c, V_m$) jointly respect known physical couplings. Finally, we analyze the failure modes revealed by these probes to identify systematic breakdowns of alignment across intensity regimes. Together, these analyses provide a principled diagnostic of when and how visually strong representations fail to support physically
meaningful reasoning.

\subsection{Experimental Setup}
\label{subsec:experiment_setup}
We first specify the shared dataset partition, model suite, and probing protocol used across all three alignment probes.

\looseness=-1\textbf{Dataset and regime definition.}
To evaluate the uniformity requirement introduced in~\cref{def:isomorphism},
we partition the dataset into intensity-based regimes.
While maximum sustained wind $V_m$ is retained for completeness, regimes are defined using minimum central pressure $P_c$ to avoid inconsistencies arising from agency-specific wind reporting.
We define the intense regime as $P_c < 980$ hPa and the moderate regime as $P_c \ge 980$ hPa.
This threshold corresponds to the point at which eye-like structures become statistically prevalent and visually dominant in infrared imagery~\citep{dvorak1975tech}.

\looseness=-1\textbf{Models and probing protocol.}
We evaluate a diverse suite of frozen vision foundation models (VFMs), including general-purpose vision models
(DINOv2~\citep{oquab2023dinov2}, DINOv3~\cite{simeoni2025dinov3},
CLIP~\citep{radford2021learning}, SigLIP~\citep{zhai2023sigmoid},
SigLIP2~\citep{tschannen2025siglip})
and generative architectures such as MAE~\citep{he2022masked}.
For each model, we extract a global representational summary from the frozen backbone.
Since our objective is single-scalar regression of minimum central pressure $P_c$ rather than dense spatial prediction, we use the \texttt{CLS} token as the primary representation. A comparative analysis using the aggregated spatial mean $\mathbf{z}_{\mathrm{sp}}$ is provided in~\cref{app:experiments}.
All representations are evaluated using a restricted linear probe $L \in \mathcal{H}_{\mathrm{lin}}$.
This design choice directly reflects our definition of structural alignment: successful prediction under a linear readout implies that the physical variable is \emph{linearly identifiable} within the latent space, rather than recovered through decoder expressivity~\pcref{tab:alignment-hierarchy}.

\looseness=-1\textbf{Controlling for alternative explanations.}
Differences in model behavior across intensity regimes could in principle arise from confounding factors unrelated to representation quality, including sample imbalance, trivial correlations in the data, limited probe capacity, or biases specific to static image pretraining.
To isolate representational effects, all analyses in this section are conducted on \emph{regime-balanced subsets} with a unified linear probing protocol.

Beyond this primary control, we provide additional sanity checks in~\cref{app:experiments}: nonlinear probes (MLP and transformer probe) rule out a purely linear-probe artifact in~\cref{app:sanity_check_nonlinear_probe}; task-specific models trained from scratch verify that pressure remains statistically recoverable from the imagery in ~\cref{app:sanity_checks_supervised}; spatial-token aggregation rules out dependence on the CLS token in \cref{app:sanity_check_spatial_token_mean}; and video-based models such as VideoMAE \citep{tong2022videomae}, V-JEPA2 \citep{assran2025v} and X-CLIP \citep{ni2022expanding} with temporal predictive or video-language objectives test whether the failure is specific to static image pretraining \pcref{app:sanity_check_video_models}.

\begin{center}
    \begin{tcolorbox}[
        colframe=DdLightBlue, 
        colback=DdLightBlue!5, 
        coltext=DdLightBlue!30!black, 
        coltitle=DdLightBlue!20!black, 
        fonttitle=\bfseries,
        width=\columnwidth, 
        left=3mm, right=3mm, top=2mm, bottom=2mm, 
        boxrule=.6mm, 
        title=Level 1: Static Fidelity ($Q_{\text{stat}}$)
    ]
    \small
    \textbf{Probe:} Can the model linearly identify the physical state $\mathbf{y} = P$ from a latent representation $\mathbf{z} = g(\mathbf{x})$ extracted from a static frame $\xb \in \Xcal$?
    \end{tcolorbox}
\end{center}

\begin{figure*}[t]
    \centering
    \begin{minipage}[t]{0.3\linewidth}
        \centering
        \includegraphics[width=\linewidth]{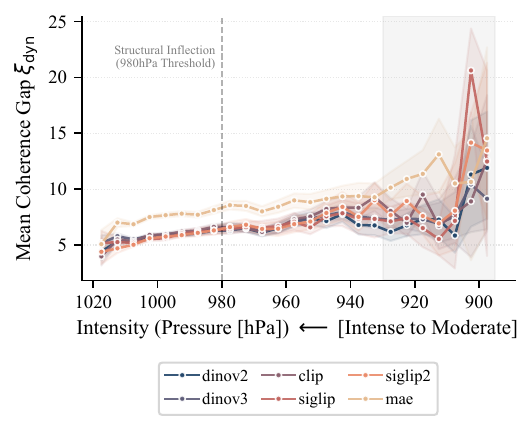}
        \captionof{figure}{\textit{Dynamic Coherence Decay ($Q_{\text{dyn}}$)} across model families.
        The gap $\xi_{\mathrm{dyn}}$ increases with intensity ($P_c \downarrow$),
        with error spikes in the catastrophic regimes ($P_c < 920$ hPa),
        indicating degraded temporal alignment in extremes and compounding dynamical miscalibration.}
        \label{fig:res_dyn}
    \end{minipage}
    \hfill
    \begin{minipage}[t]{0.66\linewidth}
        \centering
        \includegraphics[width=\linewidth]{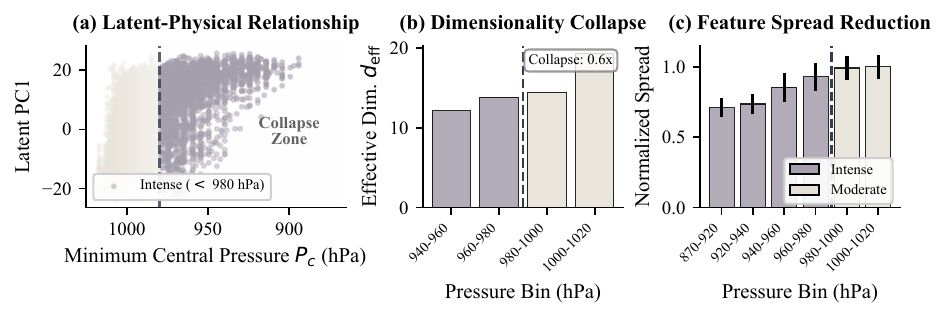}
        \captionof{figure}{\looseness=-1 \textit{Failure Mode Analysis:} DINOv3-base representations collapse in intense regimes.
        (a) PC1 varies systematically with pressure in moderate regimes, but this variation compresses in intense regimes, indicating reduced latent resolution of physically meaningful intensity differences.
        (b) Effective dimensionality drops in intense pressure bins, showing that within-bin variation concentrates into fewer latent directions.
        (c) Mean pairwise feature spread decreases in the same regime, showing that physically distinct storms are mapped closer together in latent space.
        }
        \label{fig:latent_collapse}
    \end{minipage}
    \vspace{-6pt}
\end{figure*}

\looseness=-1\textbf{Probing protocol.}
We adopt a trajectory-level split to prevent temporal or spatial leakage between training and evaluation.
A linear surrogate $h \in \mathcal{H}_{\mathrm{lin}}$ is trained on $\mathcal{M}_{\mathrm{train}}$ following the regime-balanced evaluation protocol described in \cref{subsec:experiment_setup}. We report the normalized static fidelity error 
\begin{equation}
    \xi_{\mathrm{stat}}
:= \frac{\|h(\mathbf{z}) - P_c\|}{\sigma(P_c)}
\end{equation}
defined as the pointwise absolute error normalized by the global standard deviation of $P_c$.
Under this metric, $\xi_{\mathrm{stat}} = 1.0$ corresponds to the performance of a naive mean estimator.

\looseness=-1 \textbf{Results.} 
Across all evaluated Vision Foundation Models (VFMs), we observe a systematic performance degradation in the intense regime (see \cref{fig:static_fidelity}).
While errors remain stable and tightly concentrated in the \textcolor{DdDarkYellow}{Moderate} regime, both the median error and the frequency of catastrophic outliers ($\epsilon_{\mathrm{stat}} > 1.0$) increase sharply once the
$980\,\mathrm{hPa}$ threshold is crossed.
Notably, this behavior is consistent across model families, indicating a regime-dependent failure of linear identifiability rather than a model-specific artifact.
Furthermore, we show in~\cref{app:sanity_check_nonlinear_probe} that this degradation is not removed by increasing probe capacity: nonlinear MLP and Transformer probes exhibit the same qualitative moderate-to-intense gap, while the geometric diagnostics in \cref{subsec:failure_mode_analysis} show that the underlying feature distribution itself contracts in the intense regime.

\begin{center}
    \begin{tcolorbox}[colframe=DdLightYellow, colback=DdLightYellow!5, coltext=DdLightYellow!30!black, coltitle=DdLightYellow!20!black, fonttitle=\bfseries, width=\columnwidth, left=3mm, right=3mm, top=2mm, bottom=2mm, boxrule=.6mm, title=Level 2: Dynamic Coherence ($Q_{\text{dyn}}$)]
    \small
    \textbf{Probe:} Given the optimal readout $L$ from $Q_{\text{stat}}$,
    does the projected latent displacement
    $L \Delta \mathbf{z}_t $
    align with the corresponding physical state evolution
    $\Delta \mathbf{y}_t$?
    \end{tcolorbox}
\end{center}
\looseness=-1\textbf{Probing Protocol.}
We follow the same trajectory-level split and regime-balanced evaluation protocol as in $Q_{\mathrm{stat}}$.
Dynamic coherence is evaluated via finite differences, exploiting the fixed temporal resolution of the data ($\Delta t = 3$ hours).
Specifically, we measure the misalignment
\begin{equation}
    \xi_{\mathrm{dyn}} := \| L \Delta \mathbf{z}_t - \Delta \mathbf{y}_t \|
\end{equation}
which approximates the gap between latent and physical time derivatives up to a constant scaling.

\looseness=-1 \textbf{Results.}
\Cref{fig:res_dyn} reports the dynamic coherence gap
$\epsilon_{\mathrm{dyn}}$ as a function of cyclone intensity, indexed by minimum central pressure $P_c$.
Consistent with the static probe, $\epsilon_{\mathrm{dyn}}$ remains stable in Moderate regimes but increases sharply once the $980\,\mathrm{hPa}$ threshold is crossed.
This degradation compounds with increasing variance and culminates in a pronounced error spike in the catastrophic regimes ($P_c < 920\,\mathrm{hPa}$), indicating a breakdown of latent–physical dynamical alignment.

\begin{center}
    \begin{tcolorbox}[
        colframe=DdLightPurple, 
        colback=DdLightPurple!5, 
        coltext=DdLightPurple!30!black, 
        coltitle=DdLightPurple!20!black, 
        fonttitle=\bfseries, 
        width=\columnwidth, 
        left=3mm, right=3mm, top=2mm, bottom=2mm, 
        boxrule=.6mm, 
        title=Level 3: Manifold Consistency ($Q_{\text{con}}$)
    ]
    \small
    \textbf{Probe:} Do the linearly recovered state variables $(\widehat P_c, \widehat V_m)$ respect known physical couplings between
    pressure and wind in tropical cyclones?
    \end{tcolorbox}
    \vspace{1em}
\end{center}

\looseness=-1\textbf{Probing Protocol.} 
To assess \emph{manifold consistency}, we test whether independently recovered pressure and wind estimates obey a basic physical coupling observed in tropical cyclones.
While no closed-form law fully characterizes the pressure–wind relationship, gradient wind balance implies that, all else being equal, lower-latitude storms require higher wind speeds to sustain the same
central pressure deficit due to the reduced Coriolis force
($f \propto \sin \phi$). 
This yields a simple, testable monotonic constraint:
for storms with comparable minimum central pressure,
the expected maximum wind speed at low latitudes
($|\phi| < 15^\circ$) should exceed that at higher latitudes
($|\phi| > 25^\circ$).
We quantify the ground-truth separation as
\begin{equation}
    \Delta V_m := V_m^{\text{low-lat}} - V_m^{\text{high-lat}}
\end{equation}
where $V_m$ denotes the 1-minute maximum sustained wind,
and evaluate whether the recovered winds $\Delta \hat V_m$ exhibit the same ordering.
Manifold inconsistency is measured by the relative deviation
\begin{equation}
    \psi_{\mathrm{con}} := \frac{\|\Delta V_m - \Delta \hat V_m\|}{\Delta V_m}
\end{equation}
with smaller values indicating better preservation of the
pressure–wind manifold structure.

\looseness=-1 \textbf{Results.}
Across models, manifold consistency deteriorates substantially with
increasing cyclone intensity.
In the Moderate regime ($P_c \ge 980\,\mathrm{hPa}$), the constraint violation remains limited, with $\psi_{\mathrm{con}} \approx 20\%$.
In contrast, in the Intense regime ($P_c < 980\,\mathrm{hPa}$), $\psi_{\mathrm{con}}$ increases sharply to approximately $55\%$, indicating a failure to preserve physically meaningful pressure–wind
structure under stronger dynamical forcing. We provide further qualitative results on this manifold misalignment in~\cref{app:experiments}.


\subsection{Failure Mode Analysis}
\label{subsec:failure_mode_analysis}

\looseness=-1
To identify the source of the regime-dependent failures observed in the structural alignment probes, we analyze the \emph{intrinsic geometry} of the latent representations produced by DINOv3-base, the strongest-performing model under our linear evaluation protocol. All analyses are performed within pressure bins containing sufficiently many samples $(N \ge 500)$.

\looseness=-1 \textbf{Experimental protocol.}
We consider three complementary diagnostics of latent geometry. First, in \cref{fig:latent_collapse}a, we apply PCA to the frozen representations and examine the relationship between the first principal component (\textsc{PC1}) and minimum central pressure $P_c$. This tests whether the dominant latent direction preserves physically meaningful variation in cyclone intensity. 
Importantly, the sign or absolute value of PC1 has no physical meaning by itself; PCA directions are identifiable only up to sign. The relevant quantity is not whether intense storms have high or low PC1 values, but whether physically meaningful variation in $P_c$ remains resolved along dominant latent directions. \cref{fig:latent_collapse} (a) shows that this resolution diminishes in the intense regime.
Second, in \cref{fig:latent_collapse} (b), we compute the \emph{effective dimensionality} of the within-bin feature covariance spectrum using the participation ratio:
\begin{equation}
    d_{\mathrm{eff}}
    =
    \frac{\left( \sum_i \lambda_i \right)^2}
    {\sum_i \lambda_i^2},
\end{equation}
where $\lambda_i$ are the eigenvalues of the within-bin covariance matrix. This quantity estimates how many orthogonal directions contribute substantially to feature variation: high effective dimensionality indicates that variance is distributed across many latent directions, whereas low effective dimensionality indicates concentration into a few dominant directions. 
Third, in \cref{fig:latent_collapse} (c), we measure the \emph{feature spread}, defined as the mean pairwise Euclidean distance between centered features within each pressure bin. This quantifies how locally dispersed the representations remain in feature space; lower spread indicates that physically distinct samples are mapped closer together.

\looseness=-1 \textbf{Results.}
As shown in \cref{fig:latent_collapse}, DINOv3 representations undergo a systematic collapse in intense regimes. Once the minimum central pressure falls below $980\,\mathrm{hPa}$, physically distinct cyclones are mapped to a narrower region of latent space, reflected by the shrinking variation of \textsc{PC1} with pressure in \cref{fig:latent_collapse}a. This collapse is accompanied by a sharp reduction in effective dimensionality, by approximately $60\%$ in \cref{fig:latent_collapse}b, and a corresponding decrease in feature spread in \cref{fig:latent_collapse}c. Together, these diagnostics indicate that visually extreme but physically distinct storms are compressed into a lower-rank latent subspace, causing a loss of distinguishable physical degrees of freedom.

\paragraph{Interpretation.}
Inferring pressure from satellite imagery is intrinsically more difficult in the high-intensity regime, since visual differences between storms become subtler and measurement noise increases. However, this difficulty alone does not explain the behavior of frozen VFM representations. A supervised pixel-space baseline trained from scratch achieves consistently lower error in the same regime, indicating that pressure-related signal remains statistically present in the observations~\pcref{fig:resne_sanity}. In contrast, frozen VFM representations exhibit reduced effective dimensionality and diminished feature spread precisely where physical variation remains important~\pcref{fig:latent_collapse}. These results therefore point to a systematic misalignment between the learned latent geometry and the physically relevant structure, rather than to task difficulty alone.

\section{Related Works}
\label{sec:related-work}

\looseness=-1 \textbf{World Models and OOD Generalization: When Passing the Test Is Not Enough.}
Recent vision foundation models (e.g., VideoMAE~\citep{wang2023videomae,tong2022videomae,he2022masked}, V-JEPA~\citep{bardes2024revisiting,assran2025v}, DINO~\citep{oquab2023dinov2,simeoni2025dinov3}, SigLIP~\citep{zhai2023sigmoid,tschannen2025siglip}, CLIP~\citep{radford2021learning}) are often motivated as general-purpose world models, aiming to learn representations that remain predictive under distribution shift.
We focus exclusively on perception-based world models, where the physical state must be inferred from unstructured visual inputs, in contrast to emulator-based models operating on fully observed physical grids~\citep{price2023gencast,alet2025skillful,bonev2025fourcastnet,bi2022pangu,pathak2022fourcastnet}.
In this paradigm, success is typically measured by perceptual or predictive robustness—e.g., reconstruction quality, semantic consistency, or downstream accuracy on OOD benchmarks.
However, such evaluations admit a blind spot: a model may generalize perceptually under distribution shift while failing to preserve physically meaningful degrees of freedom.
We show that even models performing well on standard OOD benchmarks, including those targeting geographical shift, can collapse distinct physical states into a narrow latent region~\pcref{fig:paradox}.
Thus, passing perceptual OOD tests does not certify that a representation functions as a valid surrogate for physical state variables.

\looseness=-1 \textbf{Probing and Representation Diagnostics: Beyond Average-Case Recoverability.}
Probing is the standard tool for interpreting frozen representations, commonly implemented via linear classifiers or regressors~\citep{alain2016understanding,hewitt2019structural}.
In scientific ML, probes are often used to assess whether representations encode physical quantities or align with scientific concepts~\citep{donhauser2024towards}.
Existing critiques primarily emphasize over-interpretation, noting that probes may succeed by exploiting superficial correlations rather than faithful structure~\citep{elazar2021amnesic}.
We identify a more consequential failure mode: \emph{regime-dependent collapse}.
This failure mode is invisible to standard metrics and reflects a breakdown of latent geometry.
By introducing alignment probing as a test of necessary structural conditions~\pcref{tab:alignment-hierarchy}, our framework exposes a failure mode missed by conventional probe-based evaluation.

\looseness=-1 \textbf{Failure Modes of Scientific Representations Under Perceptual Saturation.}
Beyond input modality and evaluation protocol, an open question remains: \emph{what concrete failure modes allow perceptually robust world models to violate scientific structure?}
We identify a previously under-characterized mechanism: \emph{regime-dependent latent collapse induced by visual saturation}.
In intense regimes, visually similar observations may correspond to sharply different physical states, allowing predictive success to mask representational failure~\citep{geirhos2020shortcut}.
We show that representations can remain predictive and OOD-robust while collapsing along physically meaningful axes, effectively compressing distinct states into a low-dimensional latent region~\pcref{fig:paradox}.
Crucially, this collapse does not manifest as degraded average performance or perceptual inconsistency, consistent with prior observations that standard metrics can obscure internal representational pathologies~\citep{elazar2021amnesic}.
Instead, it compromises the ability to support coherent dynamics, regime transitions, and constraint satisfaction—properties required for scientific reasoning and surrogate physical modeling~\citep{cummins1989meaning,swoyer1991structural}.

\looseness=-1 \textbf{Relation to Causal Representation Learning.}
Causal representation learning (CRL~\citet{scholkopf2021toward}) aims to recover latent variables corresponding to causal factors, often under strong assumptions about interventions or multi-environment diversity~\citep{von2024nonparametric,zhang2024identifiability,zhang2024causal,yao2023multi,yao2024unifying}.
Such assumptions are rarely satisfied in large-scale observational settings, particularly when physical states are only indirectly observed via satellite imagery.
Our framework offers a more operational alternative: rather than seeking element-wise causal identifiability, we test for \emph{Structural Isomorphism}~\pcref{def:isomorphism}, requiring that representations preserve physical coordinates up to a distributed linear reparameterization.
This condition theoretically guarantees intervention-consistent rollouts~\pcref{thm:causal-grounding} while remaining empirically testable on foundation models. As such, our framework bridges predictive world models and causal discovery, focusing on a regime that is both scientifically meaningful and empirically accessible.

\section{Conclusion and Limitations}

This work identifies a \textit{Perception--Physics Paradox} in vision foundation models (VFMs) applied to tropical cyclone satellite imagery: models can appear perceptually robust while failing to preserve physically meaningful latent structure. In intense regimes, where visual saturation such as eye formation coincides with subtle but important physical variation, frozen VFM representations lose resolution along critical physical dimensions~\pcref{subsec:failure_mode_analysis}. This failure is not revealed by average-case metrics or visually defined OOD tests alone, such as the cross-agency evaluation in~\cref{fig:paradox}(c).

To study this gap, we introduce \textit{Scientific Alignment} and operationalize one testable aspect through \textit{structural isomorphism}~\pcref{def:isomorphism}. This yields locally falsifiable necessary conditions---static fidelity, dynamic coherence, and manifold consistency---which we evaluate through a systematic probing pipeline~\pcref{tab:alignment-hierarchy}. Together with \cyclife, a reproducible tropical cyclone benchmark dataset designed for physically targeted stress tests, these probes show consistent structural misalignment of existing VFMs in physically extreme regimes~\pcref{sec:probing_scientific_alignment}.

Notably, the \textit{Perception--Physics Paradox} is not specific to tropical cyclones. It can arise whenever observations become locally insensitive to physically meaningful variation in extreme regimes, even though the underlying signal remains present and recoverable. Similar saturation mechanisms appear in other high-impact remote sensing settings, such as wildfire monitoring, where thermal observations can saturate once flames fill a sensor pixel~\citep[Sec.~3.2]{wooster2005retrieval}. These settings are often rare, destructive, and difficult to instrument, which motivates benchmarks that explicitly target extreme-regime coverage.

More broadly, our framework provides necessary rather than sufficient conditions for scientific usability. Future work should extend these probes to other physical systems and develop alignment-aware objectives based on temporal consistency, physical constraints, and multi-view or multi-modal supervision.

\subsection*{Impact Statement}
\looseness=-1 This work contributes \cyclife, a global and reproducible tropical cyclone dataset construction pipeline, and a diagnostic framework for evaluating scientific alignment in learned representations. \cyclife unifies multi-agency best-track records with satellite observations, enabling transparent reconstruction and auditing as source data evolve. We hope this supports careful evaluation, reproducibility, and principled representation learning for high-stakes scientific applications.

\subsection*{Acknowledgements}
\looseness=-1 We are especially grateful to Michael Tschannen for his generous support, careful feedback, and helpful suggestions, and to the CLAI group at ISTA for valuable discussions throughout the project. Dingling Yao acknowledges support from a Google Initiated Gift and from the Google PhD Fellowship Program, with support from Google.org. Adeel Pervez was supported by funding from the European Union’s Horizon 2020 research and innovation programme under the Marie Skłodowska-Curie Grant Agreement No. 101034413.

\bibliographystyle{icml2026}
\bibliography{ref_cleaned}

\newpage
\appendix
\onecolumn
\input{appendix}




\end{document}

%% file: appendix.tex
\addcontentsline{toc}{part}{Appendix} %
\part*{Appendix} %
{
  \hypersetup{linkcolor=BrickRed}  
}
\makenomenclature
\renewcommand{\nomname}{} 
\section{Notation and Terminology}
\label{app:notations}
This section provides a glossary of symbols and notations used throughout the paper.
\vspace{-2.5em}
\nomenclature[1]{\(\Scal\)}{Physical System}
\nomenclature[1]{\(\yb\)}{Physical State}
\nomenclature[1]{\(\Ycal\)}{Physical State Space}
\nomenclature[1]{\(\xb\)}{Observation}
\nomenclature[1]{\(\Xcal\)}{Observation Space}
\nomenclature[1]{\(\phi\)}{Observation Function}
\nomenclature[1]{\(\Pcal\)}{Constraint Set}
\nomenclature[2]{\(\zb\)}{Latent Representation}
\nomenclature[2]{\(\Zcal\)}{Latent Space}
\nomenclature[2]{\(g\)}{Encoder}
\nomenclature[3]{\(\mu\)}{Regime Measure}
\nomenclature[3]{\(\Mcal\)}{Regime Family}
\nomenclature[4]{\(\Dcal_\mu\)}{Data distribution induced by $\mu$}
\nomenclature[4]{\(f_{\mu}\)}{Regime-specific Vector Field}
\nomenclature[5]{\(h\)}{Surrogate Model}
\nomenclature[5]{\(\Hcal\)}{Surrogate Model Class}
\nomenclature[5]{\(Q\)}{Structural Alignment Probe}
\nomenclature[5]{\(\Rcal\)}{Reference Domain}
\nomenclature[5]{\(\psi\)}{Probe Evaluation Metrics}
\nomenclature[8]{\(\epsilon_{\text{stat, dyn, con}}\)}{Theoretical Error Bounds for Static Fidelity, Dynamic Coherence and Manifold Consistency}
\nomenclature[8]{\(\epsilon_{\text{stat, dyn, con}}\)}{Theoretical Error Bounds for Static Fidelity, Dynamic Coherence and Manifold Consistency}
\nomenclature[8]{\(\xi_{\text{stat, dyn, con}}\)}{Estimated Errors of Static Fidelity, Dynamic Coherence and Manifold Consistency Probes}
\nomenclature[9]{\(P_c\)}{Minimum Central Pressure}
\nomenclature[9]{\(V_m\)}{Maximum Sustained Wind}
\printnomenclature

\section{Proofs}
\label{app:proofs}
This section summarizes the proofs for theoretical statements in~\cref{sec:scientific-alignment}.

\implications*
\begin{proof}
    To prove \cref{prop:implications}, we demonstrate that the definition of Structural Isomorphism \pcref{def:isomorphism} establishes a rigid linear bridge between the latent space $\mathcal{Z}$ and the physical state $\mathcal{Y}$. 
    This bridge ensures that the properties of the physical system are preserved in the latent manifold, up to a bounded error.
    
    \textbf{Linear Decoder $L$:} Since $\mathbf{A}: \mathcal{Y} \to \mathcal{Z}$ is an injective linear embedding with $\dim(\mathcal{Z}) = k \geq \dim(\mathcal{Y}) = m$, there exists a Moore-Penrose pseudoinverse (or left-inverse) $L \in \mathbb{R}^{m \times k}$ such that:
    \begin{equation}
        L\mathbf{A} = \mathbf{I}_m,
    \end{equation}
    where $\mathbf{I}_m$ is the $m \times m$ identity matrix. This $L$ acts as our linear decoder.

\textbf{(i) Proof of Static Fidelity.}
    By \cref{def:isomorphism}, we have $\mathbf{z} = \mathbf{A}\mathbf{y} + \epsilon_\mu(\mathbf{y})$. Multiplying both sides by $L$:
    \begin{equation}
        L\mathbf{z} = L\mathbf{A}\mathbf{\yb} + L\epsilon_\mu(\yb) = \mathbf{y} + L\epsilon_\mu(\yb).
    \end{equation}
    The error in recoverability is $\| \mathbf{y} - L\mathbf{z} \| = \| L\epsilon_\mu(\yb) \|$. 
    Taking the supremum over the regime family $\mathcal{M}$:
    \begin{equation}
        \sup_{\mu \in \mathcal{M}} \mathbb{E}_{\yb \sim \Dcal_\mu} [\|\mathbf{y} - L\mathbf{\zb}\|] 
        \leq 
        \|L\| \cdot \sup_{\mu \in \mathcal{M}} \mathbb{E}_{\yb \sim \Dcal_\mu} [\|\epsilon_\mu(\yb)\|] \leq \|L\| \bar{\epsilon} := \epsilon_{\mathrm{stat}}.
    \end{equation}

\textbf{(ii) Proof of Dynamic Coherence.}
    Differentiating the isomorphism relation $\zb = \Ab\yb + \epsilon_\mu(\yb)$ with respect to time, and applying the chain rule to the residual term $\dot{\epsilon}_\mu(\yb) = J_{\epsilon_\mu}(\yb)\dot{\yb}$:
    \begin{equation}
        \dot{\zb} = \mathbf{A} \dot{\yb} + J_{\epsilon_\mu}(\yb)\dot{\yb}.
    \end{equation}
    Multiplying by the decoder $L$ (where $L\Ab = \mathbf{I}$):
    \begin{equation}
        L \dot{\zb} = \dot{\yb} + L J_{\epsilon_\mu}(\yb) \dot{\yb}.
    \end{equation}
    The gap between the decoded latent transition and the vector field is:
    \begin{equation}
        \|\dot \yb - L \dot\zb \| = \| L J_{\epsilon_\mu}(\yb) \dot\yb \| 
        \leq \|L\| \cdot 
        \|J_{\epsilon_\mu}(\yb)\| \cdot 
        \| \dot \yb\|.
    \end{equation}
    Using the assumption that the vector field is bounded ($\|\dot{\yb}\| \le K$), we take the expectation and supremum:
    \begin{equation}
        \begin{aligned}
             \sup_{\mu \in \mathcal{M}} \mathbb{E} [\| \dot{\yb} - L \dot{\zb} \|] 
             &\leq \|L\| K \cdot \sup_{\mu \in \mathcal{M}} \mathbb{E} [\|J_{\epsilon_\mu}(\yb)\|] \\
             &\leq \|L\| K \bar{\delta} := \epsilon_{\mathrm{dyn}}.
        \end{aligned}
    \end{equation}

\textbf{(iii) Proof of Manifold Consistency.}
    For any invariant constraint $\mathcal{P}(\mathbf{y}) = 0$, we evaluate the decoded state $L\mathbf{z} = \mathbf{y} + L\epsilon_\mu(\yb)$. 
    Since $\mathcal{P}$ is Lipschitz continuous with constant $\Lambda_{\mathcal{P}}$, and $\mathcal{P}(\mathbf{y})=0$:
    \begin{equation}
        \|\mathcal{P}(L\mathbf{z})\| = \|\mathcal{P}(\mathbf{y} + L\epsilon_\mu(\yb)) - \mathcal{P}(\mathbf{\yb})\| \leq \Lambda_{\mathcal{P}} \|L\epsilon_\mu(\yb)\|.
    \end{equation}
    The uniform bound across all regimes is:
    \begin{equation}
        \sup_{\mu \in \mathcal{M}} \mathbb{E}_{\yb} [\|\mathcal{P}(L\mathbf{z})\|] \leq \Lambda_{\mathcal{P}} \|L\| \sup_{\mu \in \mathcal{M}} \mathbb{E}_{\yb} [\|\epsilon_\mu(\yb)\|] 
        = \Lambda_{\mathcal{P}} \|L\| \bar{\epsilon}
        := \epsilon_{\mathrm{con}}.
    \end{equation}
    This confirms that the latent representation remains ``physically plausible" for all regimes $\mu \in \Mcal$.
\end{proof}

\causalImplication*

\begin{proof}
    The total error of the interventional rollout can be decomposed into the error in the initial state reconstruction (static) and the error accumulated during trajectory integration (dynamic).

    \begin{enumerate}[(I)]
        \item \textbf{Initial Mapping (Base Case).}
        At the moment of intervention $t^*$, the physical state is forced to $\mathbf{y}^*$. By Definition~\ref{def:isomorphism} and Proposition~\ref{prop:implications}(i), the latent representation $\mathbf{z}^*$ corresponds to $\mathbf{y}^*$ up to the static fidelity bound. Thus, the decoded initial state $L\mathbf{z}^*$ satisfies:
        \begin{equation}
            \|L\mathbf{z}^* - \mathbf{y}^*\| \leq \epsilon_{\text{stat}}.
        \end{equation}

        \item \textbf{Compounding Dynamic Error.}
        Consider the evolution over the interval $[t^*, t_n]$. The physical displacement is the integral of the vector field:
        \begin{equation}
            \label{eq:dy}
            \Delta^n(\mathbf{y}^*) = \int_{t^*}^{t_n} \dot{\mathbf{y}}_\tau \, d\tau.
        \end{equation}
        Similarly, the displacement in the decoded latent space is the integral of the decoded derivatives:
        \begin{equation}
            \label{eq:dLz}
            L \Delta^n G(\mathbf{y}^*) = \int_{t^*}^{t_n} L \dot{\mathbf{z}}_\tau \, d\tau.
        \end{equation}
        From Proposition~\ref{prop:implications}(ii) (Dynamic Coherence), we know that uniformly over the trajectory:
        \begin{equation}
            \|L \dot{\mathbf{z}} - \dot{\mathbf{y}}\| \leq \epsilon_{\text{dyn}}.
        \end{equation}

        We verify the bound for the trajectory difference using the triangle inequality for integrals:
        \begin{equation}
            \begin{aligned}
                \| L \Delta^n G(\mathbf{y}^*) - \Delta^n(\mathbf{y}^*) \| 
                &= \left\| \int_{t^*}^{t_n} (L \dot{\mathbf{z}}_\tau - \dot{\mathbf{y}}_\tau) \, d\tau \right\| \\
                &\leq \int_{t^*}^{t_n} \| L \dot{\mathbf{z}}_\tau - \dot{\mathbf{y}}_\tau \| \, d\tau \\
                &\leq \int_{t^*}^{t_n} \epsilon_{\text{dyn}} \, d\tau \\
                &= \epsilon_{\text{dyn}} (t_n - t^*).
            \end{aligned}
        \end{equation}

        \item \textbf{Total Interventional Error.}
        The total error in the rolled-out state $\hat{\mathbf{y}}_{t_n}$ relative to the true counterfactual $\mathbf{y}_{t_n}$ is the sum of the initial displacement error and the integration drift. 
        Note that technically $\Delta^n$ measures displacement relative to the initial condition. Combining (I) and (II), the bound on the absolute state error is:
        \begin{equation}
            \epsilon_{\text{int}}(n) \leq \underbrace{\epsilon_{\text{stat}}}_{\text{Initial Condition}} + \underbrace{\epsilon_{\text{dyn}}(t_n - t^*)}_{\text{Integration Drift}}.
        \end{equation}
    \end{enumerate}
\end{proof}

\section{Physics of Tropical Cyclones}
\label{app:phys_cyclones}
Tropical cyclones constitute a canonical example of a high-impact, multiscale physical system whose evolution is governed by a small number of interacting dynamical and thermodynamical principles, yet is only indirectly observable at scale. This section provides a concise overview of the physical mechanisms underlying tropical cyclone formation, structure, and intensification, emphasizing the force balances, energy constraints, and observable signatures that link latent physical state variables—such as minimum central pressure and maximum sustained wind—to satellite infrared imagery. We aim to summarize the key physical relationships and approximations that inform how cyclone intensity is inferred from observations and motivate the study of physically meaningful representations in data-driven models.
\subsection{Large-Scale Forcing and Cyclone Genesis}
Tropical cyclones (TC) are intense, rotating storms that develop over the warm waters of the tropical oceans, which supply them with thermal energy and water vapor. Usually, the word "tropical cyclone" is used for the ones that achieve a maximum sustained wind of at least 118 km h$^{-1}$, with the weaker ones being called "tropical storms" (63-117 km h$^{-1}$) or tropical depressions ($<$63 km h$^{-1}$). Other names are used for tropical cyclones occurring in specific parts of the world; for example, "hurricanes" for those developing in the North Atlantic Ocean and "typhoons" for those in the Western Pacific. Although tropical cyclone is a relatively rare phenomenon (a few tens per year worldwide), it shows some of the highest ever recorded wind intensities \citep{stern2016gusts} and rain accumulations \citep{prat2016TC_rain}, as well as some of the highest death tolls and economic costs among all natural disasters \citep{emanuel2003TCs}.

Tropical cyclones are widely studied through a combination of numerical simulations and observational data analysis, but the physical mechanisms underlying their formation and intensification remain poorly understood.

Tropical cyclones arise from synoptic-scale ($\sim$ 1000 km) negative anomalies in the large-scale air pressure field over the tropical oceans. A radial negative pressure anomaly induces a force pointing inward, which makes air converge toward the low pressure, but at the scale we are considering, the magnitude of the Coriolis force, given by the rotation of the Earth, is enough to partially deflect the converging air toward the right/left in the Northern/Southern hemisphere, starting to create a clockwise/counterclockwise vortex. Furthermore, the low-level converging wind enhances the transfer of heat and the evaporation from the ocean, which feeds energy to the core of the cyclone, further decreasing its inner pressure and contributing also to the destabilization the air column, leading to a reinforcement of deep convective ascent.

The dissipation of wind energy induced by friction in the lower levels of the atmosphere eventually balances the energy input from the ocean, leading to the establishment of a steady state: at this point, the dynamical picture of the cyclone
includes two main air circulations: the primary one, that consists of the vortex in the horizontal plane, (a roughly axisymmetric one), and the so called secondary circulation in the vertical plane, consisting of a overturning cell in which air ascends close to the center of the cyclone (in the so called eyewall) and reaches very high altitude, near the so called tropopause. At the tropopause a stable layer of air starts, so ascending air stops moving upward and starts diverging outward. This divergence leads to a compensation inflow in the lower level, that closes the loop.
\subsection{Force Balance in the Primary Circulation}
The primary circulation can be described as originated by a balance of three main forces: the pressure gradient force, directed radially and pointing inward, given the low pressure center, the Coriolis force, given by the rotation of the Earth, which points to the right/left of the tangential wind, and therefore radially and outward, and finally the centrifugal force given by the rotating flow in the cyclone itself, which is also directed radially and pointing outward. Formally, it can be described by the following equation \citep{willoughby1990gradient}:

\begin{equation}
\label{eq:wind-pressure}
    \dfrac{V^2}{r} + fV = \dfrac{1}{\rho} \dfrac{\partial P}{ \partial r},
\end{equation}
where $V$ denotes the tangential speed at the surface and $P_s$ the surface pressure (in hPa). $r$ is the distance from the center (i.e., Earth's radius, $r \approx  6,371$km), $f$ is the Coriolis parameter which depends only on the latitude, written as $f = 2 \Omega \cdot \sin(\text{lat})\approx 15 \cdot 10^{-5} \cdot \sin(\text{lat})$ $\nicefrac{rad}{s}$ ($\Omega$ is the rotation rate of the Earth), $\rho$ is the air density, taking value of roughly $1.2 \nicefrac{kg}{m^3}$ as we are considering the surface.

\subsection{Secondary Circulation and Deep Convection}
In addition to the primary horizontal circulation, tropical cyclones exhibit a secondary circulation in the vertical plane \citep{persing2013sec}. Near the surface, air spirals inward toward the cyclone center, where it ascends rapidly in a narrow annulus known as the eyewall. This ascending motion transports heat, moisture, and angular momentum upward through the troposphere.

As the rising air approaches the tropopause, it encounters a statically stable layer that inhibits further vertical motion. The flow therefore turns outward, forming an upper-level anticyclonic outflow that diverges away from the storm center. Mass continuity requires a compensating inflow near the surface, thereby closing the overturning circulation.

Deep convection in the eyewall plays a central role in the storm’s energetics and dynamics. The release of latent heat during condensation warms the atmospheric column, further reducing surface pressure and reinforcing the pressure gradient that drives the primary circulation.

\subsection{Tropical Cyclones as Heat Engines}
One of the simplest and yet most meaningful models of a tropical cyclone is the Carnot engine model introduced by \citet{emanuel1986PI}. A Carnot engine is an ideal system that absorbs thermal energy from a source at temperature $T_2$, transforms part of it into mechanical energy, and releases the remaining into a sink at temperature $T_1$ with $T_2>T_1$. In a Carnot machine a fluid undergoes a cycle consisting of four thermodynamic transformations: a reversible isothermal expansion (temperature remains constant) , an adiabatic expansion (no heat exchange with the environment), an isothermal compression and a final adiabatic compression. The Carnot cycle can be proved to be the most efficient thermodynamic cycle possible, i.e. the with the highest mechanical energy output $W$ for a given thermal energy input $Q_{in}$ and the efficiency $\eta$ can be calculated with the following formula:
\begin{equation}
    \eta\coloneqq\frac{W}{Q_{in}}=1-\frac{T_1}{T_2}
    \label{carnot}
\end{equation}
A tropical cyclone can indeed be described as a machine that absorbs thermal energy from the ocean and transforms part of it into kinetic energy (under the form of wind energy) and releases the remaining in the upper troposphere. In particular, when air converges toward the center, it undergoes an expansion (because the pressure decreases going inward), while the temperature increases slightly, since it is constrained by the sea surface temperature. When it reaches the eyewall, it quickly ascends for many kilometers, undergoing a strong expansion, that can be considered adiabatic, since it is fast enough to avoid huge heat losses to the surrounding. Once air reaches the tropopause, it starts diverging outward, toward higher pressure (so it is a compression) and loosing energy toward the outerspace through infrared radiative emission. In this leg radiative cooling and adiabatic compression warming roughly balance each other, so it can be considered as an isothermal compression. Eventually air stops diverging and starts sinking back to the surface, warming through an ideal adiabatic compression and closing the cycle. In this view the ocean is the heat source and the tropopause the heat sink, so $T_2 = SST$ where $SST$ is the sea surface temperature, and $T_1 = T_{tr}$, where $T_{tr}$ is the temperature of the tropopause. Carnot formula of Eq.\ref{carnot} allows then to compute the maximum mechanical energy output rate for a given input, and therefore the maximum wind speed achievable, called the potential intensity of the cyclone ($V_p$). After some manipulations, the following equation follows (the $T_{tr}$ factor at the denominator, instead of $SST$ as Eq. \ref{carnot} would predict, is due to the fact that some of the mechanical energy is dissipated through friction into heat, that is reused by the system \citep{bister1998PI}):

\begin{equation}
  V_p^2=\alpha \frac{SST-T_{tr}}{T_{tr}}
    \label{PI}
\end{equation}
Where $\alpha$ is a product of different factors that accounts for the efficiency of the energy transfer between the ocean and the cyclone. In reality, potential intensity is almost never reached by the cyclones, because of many reasons including heat losses, cooling of the ocean surface given by wind-induced mixing and landfall, but it still provides an interesting upper limit.

From Eq. \ref{PI} it is clear that $V_p$ depends on both $SST$ and $T_{tr}$, which are captured by infrared imaging of tropical cyclones, suggesting the importance of looking at the pattern of deep convective clouds and clear sky areas to understand the intensification of the storm. A mature cyclone tends to show a central area of a few tens of km of radius with a high infrared emission (meaning that no deep convective cloud is there), called the \textit{eye} of the cyclone, surrounded by a ring of very low infrared emission, where the deepest convection happens, called the eyewall. Further away from the cyclone center other concentric deep convective bands alternated with non-convective regions can be typically observed \citep{emanuel2003TCs}.\\ 
Furthermore, there exist some methods to obtain a numerical estimate of the cyclone intensity from the its infrared image: the most widely used is the so called Dvorak technique \citep{dvorak1975tech}: this method uses the difference in temperature between the eye and the surrounding cloud eyewall (directly calculated from their infrared radiance) to infer the value of the minimum sea level pressure (MSLP), since a colder cloud top generally means a more intense deep convection and therefore a lower pressure at the center. Dvorak technique yields good estimation of MSLP, avoiding the necessity of conducting more difficult and/or expensive \textit{in-situ} measurements, but the recent advances of machine learning, image-based regression tasks can provide a stronger alternative \citep{kitamoto2023digital}.

\section{The \cyclife Dataset}
\label{app:cyclife}
This appendix provides additional engineering details of the dataset construction pipeline for \cyclife, together with summary statistics of key physical variables, including minimum central pressure $P_c$ and maximum sustained wind speed $V_m$, reported by different weather agencies.

\subsection{Dataset Resources}
\cyclife is constructed from the International Best Track Archive for Climate Stewardship (IBTrACS), version 4r01 \citep{knapp2010ibtracs,gahtan2024ibtracs}, one of the most comprehensive global tropical cyclone datasets available, spanning over 180 years (1840--present) and covering all major tropical basins worldwide.
In this work, we restrict attention to the period 1980--2024, corresponding to the modern satellite era with consistent global infrared coverage.

IBTrACS provides cyclone center locations (latitude and longitude) at a nominal \emph{3-hour} temporal resolution, together with estimates of minimum central pressure $P_c$ and maximum sustained wind speed $V_m$.
Because cyclone tracking and intensity estimation are performed by different regional meteorological agencies, certain variables are not defined consistently across the archive.
The primary inconsistency relevant to this study concerns maximum sustained wind speed: most agencies report 10-minute averaged winds at 10~m altitude, whereas the U.S.\ National Weather Service uses a 1-minute averaging window.
To avoid introducing systematic biases, our primary analyses focus on minimum central pressure $P_c$, while wind speed is retained for auxiliary analyses for manifold consistency.

\subsection{Data Cleaning and Preprocessing}

\textbf{Physical labels.}
We partition the dataset into nine sub-datasets, each corresponding to a distinct reporting agency.
A dedicated quality-control step is applied to both $P_c$ and $V_m$, as these variables are often reported at coarser temporal resolution than the cyclone track (commonly $\sim$6 hours), and are frequently missing during the earliest and latest stages of storm development.

For each cyclone track, we identify the first and last timesteps at which both wind speed and pressure measurements are available, and discard observations outside this valid interval.
Within the retained window, we linearly interpolate missing values to obtain a uniform 3-hour temporal resolution aligned with the track data.
This procedure preserves lifecycle continuity while avoiding extrapolation beyond physically supported measurements.

\textbf{Satellite imagery.}
Infrared satellite observations are obtained from the Geostationary IR Channel Brightness Temperature GridSat-B1 dataset \citep{knapp2014noaa}, which provides global, 3-hourly measurements from 1980--2024 over latitudes $70^\circ$S--$70^\circ$N and longitudes $180^\circ$W--$180^\circ$E.
The data are provided on a regular latitude--longitude grid using a Plate Carr\'ee projection, with a spatial resolution of approximately $0.07^\circ$ (corresponding to $\sim$8~km at the equator), and are distributed in NetCDF format.
GridSat reports nadir-most top-of-atmosphere brightness temperature, a quantity proportional to infrared radiance, calibrated using multiple inter-satellite normalization protocols \citep{desormeaux1993normalization,knapp2008calibration} to reduce systematic bias and variance across platforms.

Physically, infrared brightness temperature reflects the temperature at the effective cloud-top emitting level: absorption and re-emission by water vapor, cloud liquid water, and ice particles cause higher-altitude, colder cloud tops to appear darker in infrared imagery.
As a result, these measurements provide indirect but informative cues about deep convection and storm organization, both of which play a central role in tropical cyclone intensification (see App.~\ref{app:phys_cyclones} for further physical background).
For each cyclone timestep, we extract a $224 \times 224$ spatial window centered on the reported storm location, corresponding to a square region of approximately 1{,}800~km on each side.
Each extracted frame undergoes a quality check to identify non-physical values (defined by the valid brightness temperature range 140--375~K), including missing values represented as \texttt{NaN}.
We compute a per-frame validity mask based on both the proportion and spatial distribution of missing pixels.
Frames with extensive corruption are discarded.
For frames with only a small number of missing pixels in non-critical regions, we replace invalid values with a constant fill value of 200~K.
This value lies near the center of the empirical brightness temperature distribution and introduces no additional spatial structure, ensuring that the imputation does not inject artificial texture or bias into the imagery while preserving downstream compatibility.

\begin{table}[t]
\centering
\label{tab:cyclife_stats}
\begin{minipage}[t]{0.48\linewidth}
\centering
\captionof{table}{\cyclife dataset statistics by reporting agency (after cleaning) .}
\label{tab:cyclife_agency_stats}
\begin{tabular}{lc}
\toprule
\rowcolor{Gray!20} \textbf{Agency} & \textbf{\# Cyclones} \\
\midrule
ATCF            & 7 \\
BoM             & 235 \\
HURDAT (Atlantic) & 485 \\
HURDAT (East Pacific) & 448 \\
NADI            & 39 \\
New Delhi       & 286 \\
R\'eunion       & 299 \\
Tokyo           & 750 \\
Wellington      & 52 \\
\midrule
\textbf{Total}  & \textbf{2,601} \\
\bottomrule
\end{tabular}
\end{minipage}
\hfill
\begin{minipage}[t]{0.48\linewidth}
\centering
\captionof{table}{Summary statistics of physical variables in \cyclife.}
\label{tab:cyclife_physical_stats}
\begin{tabular}{lrrrr}
\toprule
\rowcolor{Gray!20}\textbf{Variable} & \textbf{Mean} & \textbf{Std} & \textbf{Min} & \textbf{Max} \\
\midrule
Pressure (hPa)  & 988.71 & 20.19 & 872.0 & 1024.0 \\
Wind speed (kt) & 47.43  & 23.16 & 3.0   & 185.0 \\
Latitude (deg)  & 10.06  & 19.30 & -60.5 & 62.0 \\
Longitude (deg) & 24.63  & 102.96 & -180.0 & 179.9 \\
\bottomrule
\end{tabular}
\end{minipage}
\end{table}

\subsection{Dataset Inspection}
We summarize global statistics of the physical variables in~\cref{tab:cyclife_physical_stats} and further examine their distribution across reporting agencies in~\cref{fig:cyclife_agency_distributions}.
In particular, we visualize per-agency histograms of minimum central pressure $P_c$ and maximum sustained wind speed $V_m$ to assess dataset coverage, inter-agency variability, and potential sources of measurement heterogeneity.
This inspection highlights systematic differences arising from agency-specific reporting practices, most notably for wind speed, and motivates our emphasis on pressure-based analyses in the main text.

\begin{figure}
    \centering
    \includegraphics[width=0.9\linewidth]{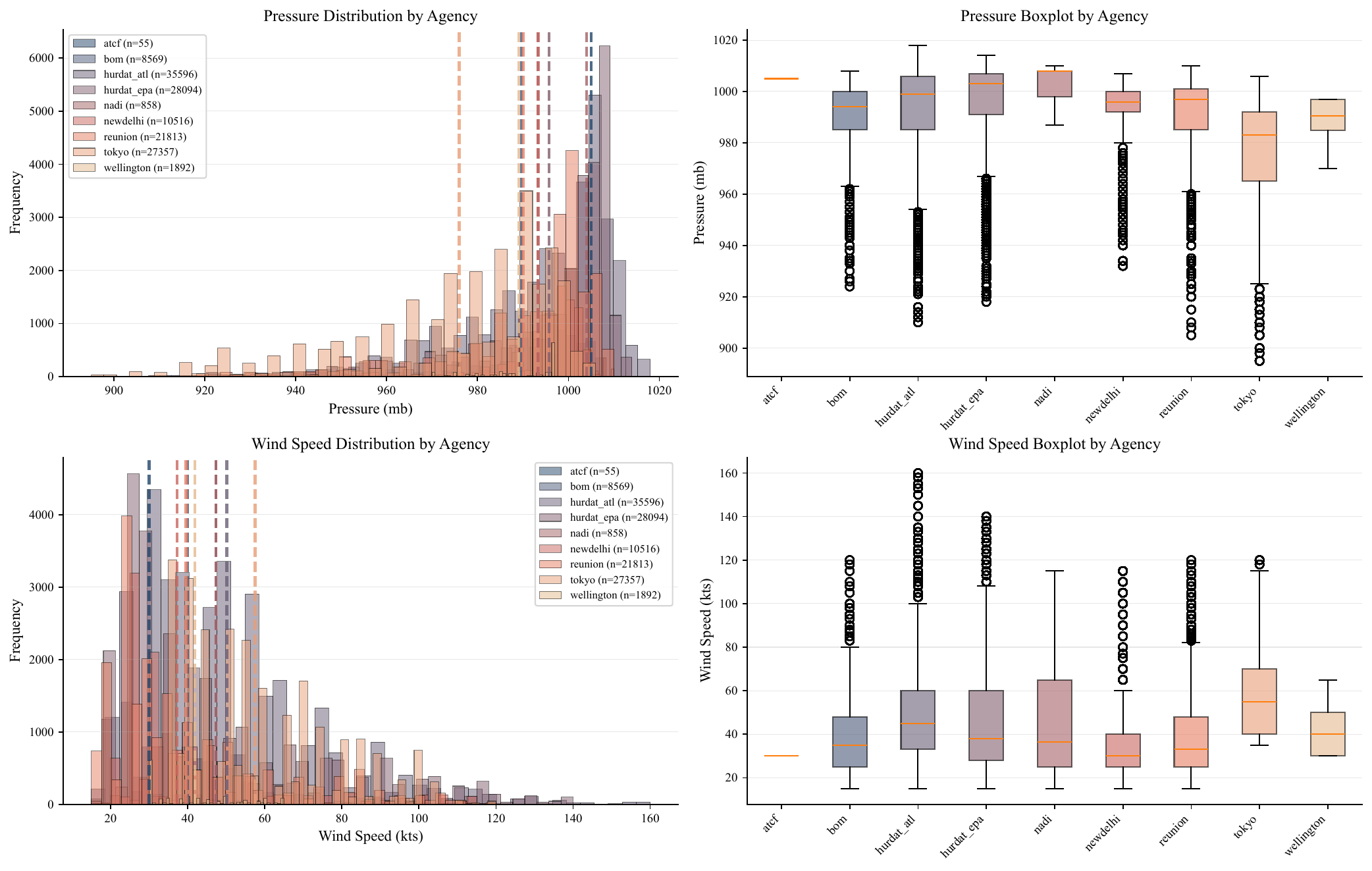}
    \caption{\textit{Dataset inspection}: Per-agency distributions of minimum central pressure $P_c$ (left) and maximum sustained wind speed $V_m$ (right) in the \cyclife dataset. While pressure distributions are broadly consistent across agencies, wind speed exhibits noticeable variability due to differences in averaging conventions (e.g., 1-minute vs.\ 10-minute winds), underscoring the greater robustness of pressure as a cross-agency physical indicator.}
    \label{fig:cyclife_agency_distributions}
\end{figure}

\section{Additional Experimental Details and Results}
\label{app:experiments}
This section summarizes implementation details and additional experimental results. Specifically, to control for potential sources of performance degradation in extreme regimes and to isolate representational collapse from other confounding factors, we conduct the following sanity checks in~\cref{app:sanity_checks_supervised,app:sanity_check_nonlinear_probe,app:sanity_check_spatial_token_mean,app:sanity_check_video_models}.

\subsection{Controlling for Task Difficulty: Supervised Pixel Baseline}
\label{app:sanity_checks_supervised}
To verify that pressure remains statistically learnable in the intense regime, we include a supervised pixel-space baseline trained from scratch on cyclone imagery using a ResNet-18 architecture. The model uses 64 base channels, a 128-dimensional hidden layer, and a dropout rate of 0.3, and is optimized with AdamW (learning rate $10^{-4}$, weight decay $10^{-2}$) under a cosine learning-rate schedule with a 100-step warm-up. Training is performed with a batch size of 64 and continued until convergence (within 200 epochs).

This experiment is not intended as a competitive benchmark against frozen foundation models, but rather as a sanity check demonstrating that pressure exhibits a statistically meaningful relationship with the observations even for intense storms. While prediction uncertainty naturally increases for stronger systems, the objective here is simply to verify that the mapping from imagery to pressure is not degenerate under held-out evaluation. For consistency with the main paper, we report mean absolute error (MAE), corresponding to the normalized absolute error used in our alignment probes.

As shown in~\cref{fig:resne_sanity}, the baseline model achieves a normalized mean absolute error of $0.4$ and a normalized median absolute error of $0.3$, normalized by the standard deviation of pressure on the evaluation set, which is consistently lower than the VFM probe results reported in~\cref{fig:static_fidelity}. 
This indicates that pressure remains statistically predictable from single-frame imagery in the intense regime, even though uncertainty increases, and that increased task difficulty alone does not account for the representational collapse observed in frozen VFMs.

\begin{figure}[t]
    \centering
        \centering
        \includegraphics[width=.5\linewidth]{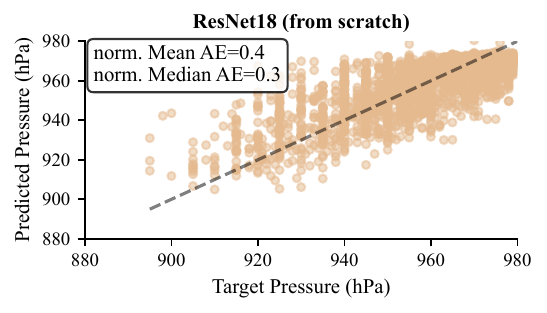}
        \caption{\textit{Supervised pixel baseline.}
        Predicted versus target pressure for a ResNet-18 trained from scratch and evaluated on held-out intense storms ($P_c < 980$~hPa).  In the intense regime, a supervised pixel-space model achieves a normalized mean and median absolute error of $0.4$ and $0.3$ respectively,  indicating that pressure remains statistically predictable from imagery despite increasing uncertainty.}
        \label{fig:resne_sanity}
\end{figure}

\subsection{Controlling for Probe Capacity: Nonlinear Probes}
\label{app:sanity_check_nonlinear_probe}

While structural isomorphism is defined in terms of linear reparameterization---and is therefore naturally evaluated with linear probes---we additionally test whether the observed regime-dependent degradation can be explained by limited probe capacity.
To this end, we replace the linear readout with higher-capacity nonlinear probes, including a two-layer MLP and a lightweight Transformer probe, and re-evaluate Static Fidelity across intensity regimes.

We conduct this analysis on DINOv3-base~\citep{simeoni2025dinov3}, which achieves the strongest performance under the linear Static Fidelity probe among the evaluated frozen VFMs (see~\cref{fig:static_fidelity}).
As shown in~\cref{tab:nonlinear_probes}, increasing probe expressivity does not remove the moderate-to-intense degradation: both nonlinear probes exhibit substantially higher normalized error $\xi_{\mathrm{stat}}$ in the intense regime.
This indicates that the failure is not merely an artifact of linear readout capacity.
Rather, together with the geometric collapse observed in~\cref{fig:latent_collapse}, these results support the interpretation that physically relevant variation becomes poorly resolved in the frozen representation itself.

\begin{table}[t]
\centering
\caption{
Nonlinear Static Fidelity probes on DINOv3-base features.
Errors are normalized MAE under $Q_{\mathrm{stat}}$, reported as mean $\pm$ std over three independent runs.
\textit{Increasing probe capacity does not remove the moderate-to-intense degradation.}
}
\label{tab:nonlinear_probes}
\begin{tabular}{lcc}
\toprule
\rowcolor{Gray!20}\textbf{Probe} & \textbf{Moderate} & \textbf{Intense} \\
\midrule
2-layer MLP & $0.382 \pm 0.021$ & $0.473 \pm 0.0327$\\
Transformer probe & $0.284 \pm 0.020$ & $0.503 \pm 0.022$ \\
\bottomrule
\end{tabular}
\end{table}

\subsection{Controlling for Representation Aggregation: Spatial-Token Mean}
\label{app:sanity_check_spatial_token_mean}
In addition to the \texttt{CLS} token, we evaluate representations obtained by aggregating spatial tokens via a simple spatial mean.
Results are shown in~\cref{fig:Q_stat_spatial_mean,fig:dyn_spatial_mean}.
Consistent with the main findings in~\cref{sec:probing_scientific_alignment}, performance degrades consistently in intense regimes, across all model families.
This confirms that representation collapse is not specific to the \texttt{CLS} token, but is a systematic property affecting both global and spatially distributed features.

\begin{figure}[thbp]
    \centering
    \includegraphics[width=\linewidth]{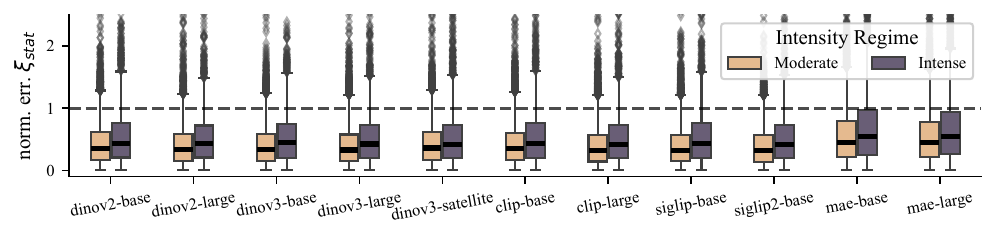}
    \caption{\textit{Static Fidelity error} using the aggregated \textbf{spatial mean} of all spatial tokens.
    As in~\cref{fig:static_fidelity}, performance degrades consistently in intense regimes.}
    \label{fig:Q_stat_spatial_mean}
\end{figure}

\begin{figure}[thbp]
    \centering
    \includegraphics[width=0.5\linewidth]{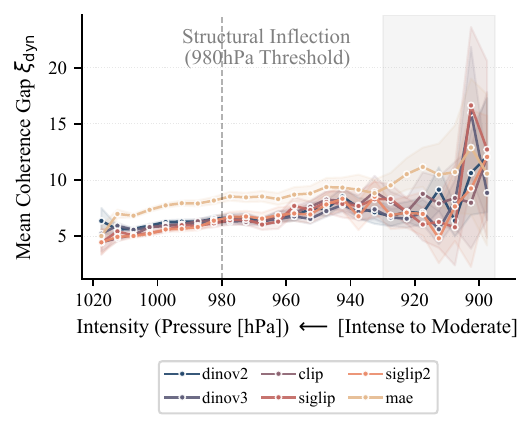}
    \caption{\textit{Dynamic Coherence gap} using the aggregated \textbf{spatial mean} of spatial tokens.
    The coherence error increases with storm intensity, mirroring the behavior observed with \texttt{CLS}-based representations.}
    \label{fig:dyn_spatial_mean}
\end{figure}

\subsection{Controlling for Pretraining Objective: Video and Temporal Models}
\label{app:sanity_check_video_models}
To test whether the failure is specific to static image pretraining, we evaluate video-based models with temporal predictive or video-language objectives. Representations are extracted from clips using the same trajectory-level splits and regime-balanced evaluation protocol. As shown in~\cref{tab:video_models}, temporal pretraining improves average prediction error but does not remove the moderate-to-intense degradation.

\begin{table}[htbp]
\centering
\caption{Static Fidelity ($Q_{\mathrm{stat}}$) for video-based models. Errors are normalized MAE, reported as mean $\pm$ std over 10 runs. Temporal pretraining improves average prediction but preserves the moderate-to-intense degradation.}
\label{tab:video_models}
\begin{tabular}{lcc}
\toprule
\rowcolor{Gray!20} \textbf{Model} & \textbf{Moderate} & \textbf{Intense} \\
\midrule
VideoMAE & $0.333 \pm 0.010$ & $0.434 \pm 0.016$ \\
V-JEPA2 & $0.235 \pm 0.008$ & $0.334 \pm 0.015$ \\
X-CLIP & $0.251 \pm 0.008$ & $0.353 \pm 0.006$ \\
\bottomrule
\end{tabular}
\end{table}

\subsection{Implementation Details.}
\label{app:implementation_details}
\textbf{Linear Probe.} All linear probing experiments use Ridge Regression implemented in \texttt{scikit-learn}.
The regularization coefficient $\alpha$ is selected via 5-fold cross-validation over a logarithmically spaced grid from $10^{-3}$ to $10^{6}$.
For the Static Fidelity and Dynamic Coherence probes, we construct a regime-balanced dataset comprising 53{,}200 samples, while the Manifold Consistency probe uses 47{,}207 samples.
In all cases, the number of samples substantially exceeds the feature dimensionality, ensuring a well-conditioned linear regression problem and avoiding rank-deficient solutions.
The $L_2$ regularization enforces stable estimation and enables fair comparisons across representations by controlling overfitting, particularly in regimes with limited effective sample diversity.

\subsection{Feature Visualization}
\label{app:feature_visualization}
To qualitatively interpret the dominant modes of variation captured by the representation, 
we visualize the extremes of the first principal component (PC1) of frozen DINOv3 features to probe what dominant variation the representation captures. 

In the full dataset, PC1 largely separates visually distinct regimes, contrasting diffuse cloud structures associated with weaker systems against organized spiral patterns characteristic of mature cyclones~\pcref{fig:pc1_all_neg,fig:pc1_950_pos}. However, when restricting to intense storms ($P_c < 950$), the same component no longer corresponds to a clear progression in physical intensity~\pcref{fig:pc1_950_neg,fig:pc1_950_pos}. 
Images at both extremes of PC1 exhibit highly similar eye and eyewall structures, despite substantial differences in pressure and wind speed. This suggests that, within the intense regime, the dominant axes of variation in the frozen representation are driven primarily by residual visual factors (e.g., orientation, texture, or cloud asymmetries) rather than physically meaningful intensity-related structure. Consistent with our quantitative analysis, this qualitative collapse indicates that physically relevant variation is compressed into a low-dimensional subspace that is misaligned with pressure.

\begin{figure}[t]
    \centering

    \begin{subfigure}[t]{\linewidth}
        \centering
        \includegraphics[width=\linewidth]{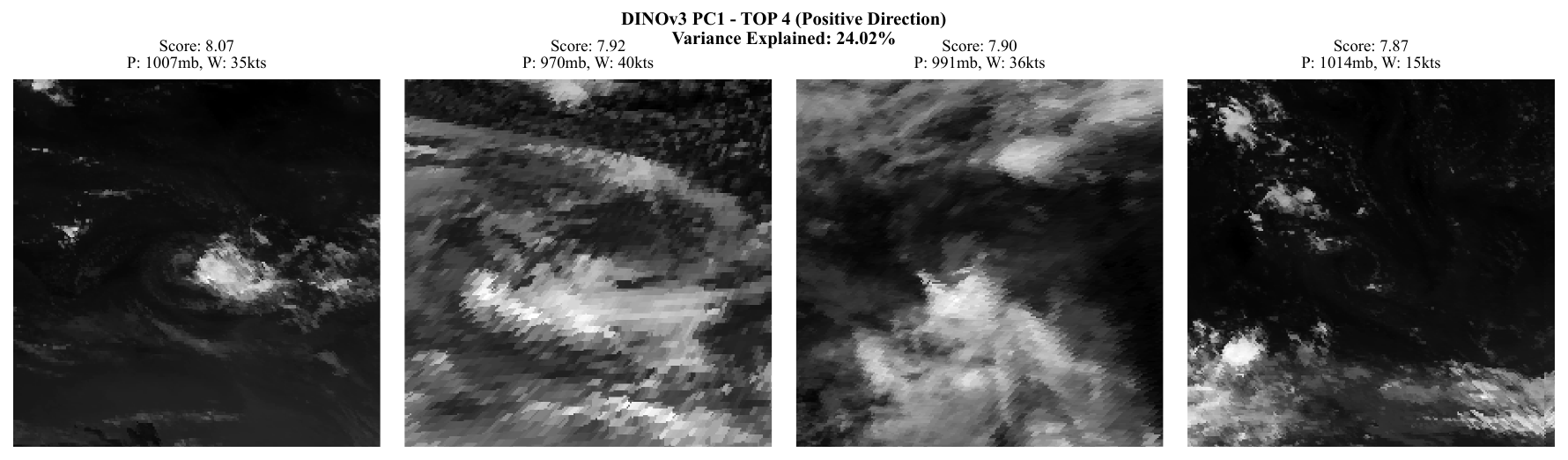}
        \caption{All regimes, PC1 positive direction.}
        \label{fig:pc1_all_pos}
    \end{subfigure}
    \begin{subfigure}[t]{\linewidth}
        \centering
        \includegraphics[width=\linewidth]{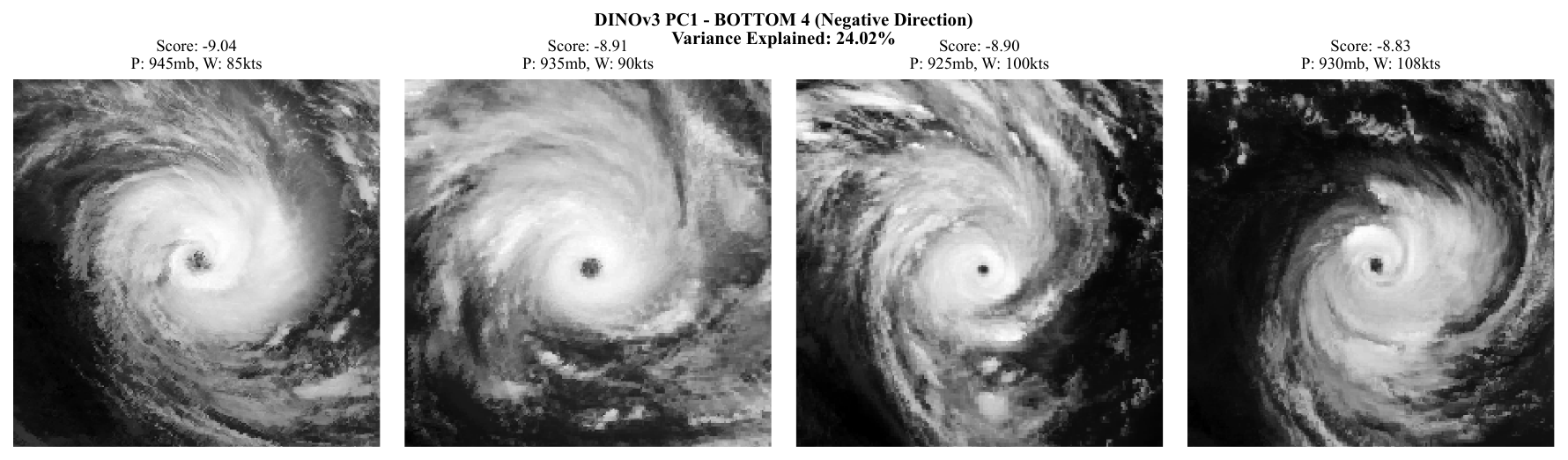}
        \caption{All regimes, PC1 negative direction.}
        \label{fig:pc1_all_neg}
    \end{subfigure}

    \begin{subfigure}[t]{\linewidth}
        \centering
        \includegraphics[width=\linewidth]{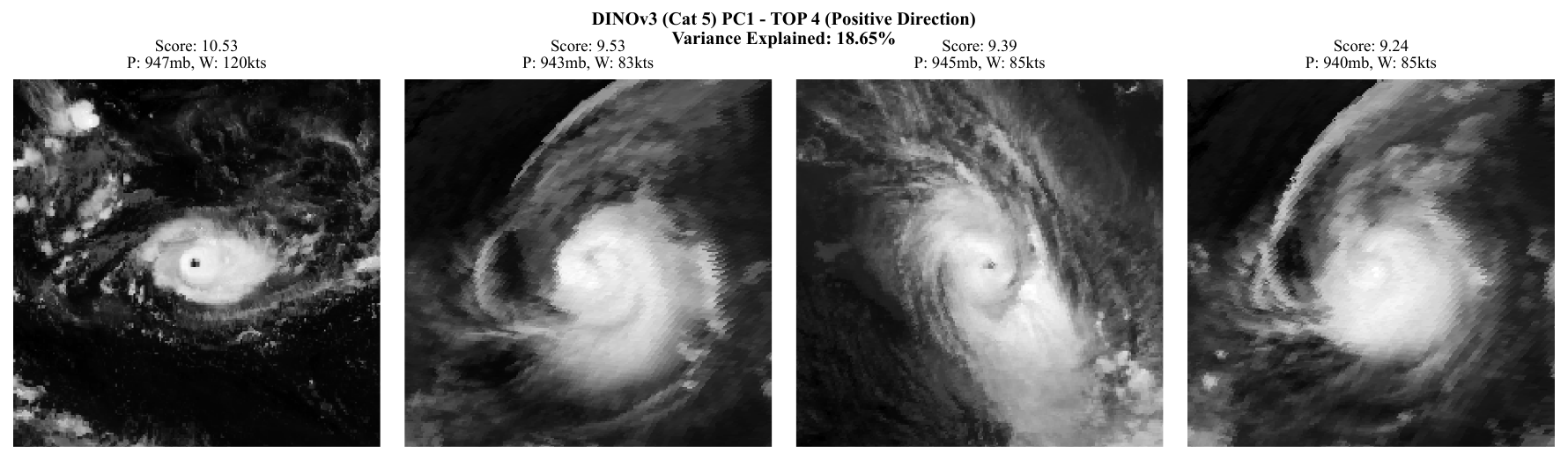}
        \caption{Intense regime ($P_c \leq 950$ hPa), PC1 positive direction.}
        \label{fig:pc1_950_pos}
    \end{subfigure}
    \begin{subfigure}[t]{\linewidth}
        \centering
        \includegraphics[width=\linewidth]{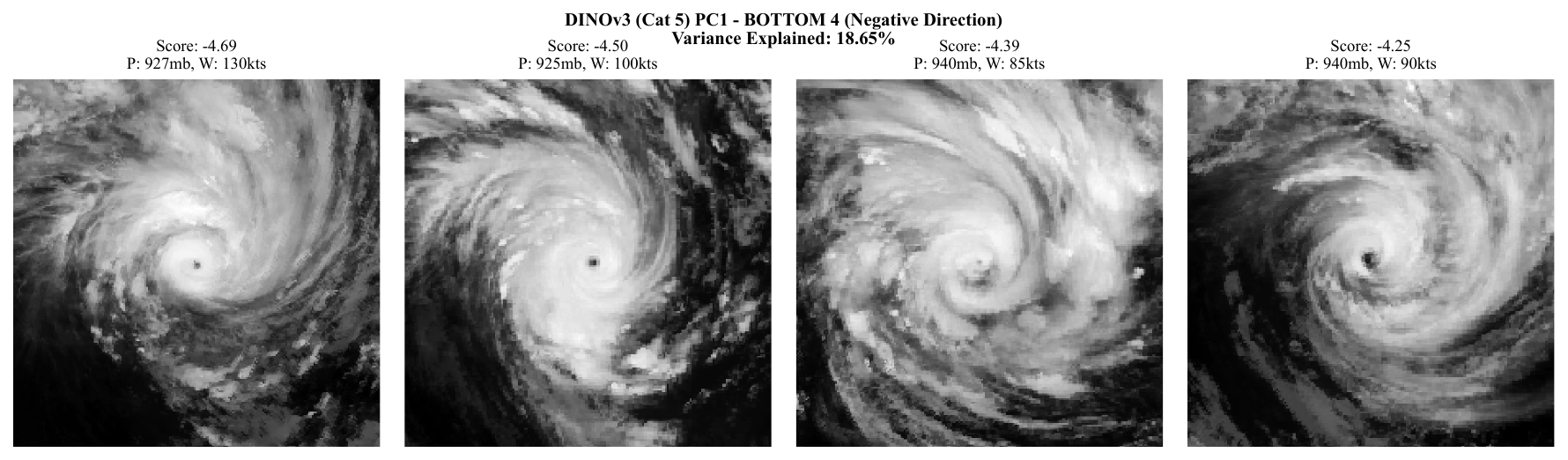}
        \caption{Intense regime ($P_c \leq 950$ hPa), PC1 negative direction.}
        \label{fig:pc1_950_neg}
    \end{subfigure}
    \caption{
    \textit{PCA exemplar visualization} of the first principal component (PC1) of DINOv3 latent representation.
    }
    \label{fig:pc1_exemplars}
\end{figure}

\subsection{Manifold Consistency Probe: Detailed Explanation}
\label{app:explanation_manifold_consistency}
We provide additional implementation details for the manifold consistency probe $Q_{\text{con}}$ introduced in~\cref{sec:probing_scientific_alignment}.

\textbf{Constraint derivation.}
Following~\citet{chavas2022rmw}, both the inner-core structure of a tropical cyclone (e.g., radius of maximum wind) and its outer size tend to increase with latitude.
We therefore assume the characteristic storm radius scales approximately as
\begin{equation*}
    r \sim B \phi,
\end{equation*}
where $\phi$ denotes latitude and $B>0$ is a constant.
Substituting this scaling into the gradient wind balance equation~\cref{eq:wind-pressure}, and approximating the pressure gradient as $P/r$, yields
\begin{equation}
\label{eq:scaled_eq}
    V^2 + f V r \sim P \iff V^2 + 2 \Omega \sin(\phi)\, B \phi \, V \sim P.
\end{equation}
Because the Coriolis term increases with latitude, a lower central pressure is required to sustain the same wind speed at higher latitudes.
We therefore define the constraint violation $\psi_{\text{con}}$ as the discrepancy between predicted and observed regional wind differences
$\Delta V_m := V_m^{\text{low-lat}} - V_m^{\text{high-lat}}$,
which serves as a proxy for gradient wind balance consistency.

\textbf{Qualitative results.}
To visualize manifold misalignment, we stratify the test set into latitude bands and estimate the conditional relationship $\hat{V}(P)$ via local regression for both ground-truth measurements and model predictions.
As shown in~\cref{fig:res_con}, the regional wind separation $\Delta V_m$ increasingly diverges from the physical expectation as forcing intensifies, particularly in the intense regime ($P_c < 980\,\mathrm{hPa}$).
This behavior aligns with the Static Fidelity and Dynamic Coherence results, reinforcing that representation collapse emerges precisely where visual cues saturate and cease to reflect the underlying physical state.

\begin{figure}
    \centering
    \includegraphics[width=.8\linewidth]{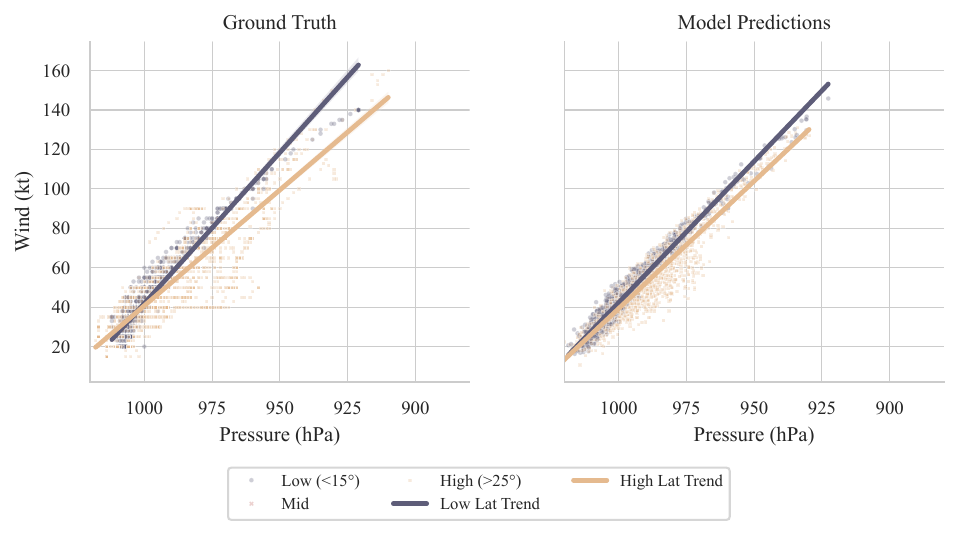}
    \caption{\textit{Manifold inconsistency.}
    Wind--pressure relationships stratified by latitude.
    \textbf{(A) Ground truth:} Physical constraints imply that \textcolor{DdDarkPurple}{low-latitude storms ($<15^\circ$)} require higher winds to sustain the same pressure deficit than \textcolor{DdDarkYellow}{high-latitude storms ($>25^\circ$)}, producing a clear separation ($\Delta V \approx 15$~kt) in the intense regime.
    \textbf{(B) Model prediction:} The model partially recovers the directional trend but underestimates the magnitude of $\Delta V_m$, indicating a violation of manifold consistency.}
    \label{fig:res_con}
\end{figure}

\section{Extended Related Work}

\textbf{Weather Foundation Models.} Recent advances in data-driven weather forecasting have produced high-capacity foundation models \citep{lam2023learning,price2023gencast,alet2025skillful,bi2022pangu,pathak2022fourcastnet,bonev2025fourcastnet} have increasingly rivaled traditional numerical weather prediction. However, our framework differs significantly as these models typically ingest gridded physical reanalysis fields (e.g., ERA5) to emulate atmospheric evolution, whereas we focus on extracting latent physical states directly from unstructured satellite infrared imagery. Furthermore, while probabilistic models like GenCast \cite{price2023gencast} and FourCastNet3 \cite{bonev2025fourcastnet} prioritize autoregressive trajectory forecasting and \textit{center tracking}, our work targets Scientific Alignment, ensuring that the model's learned representation is a linear reparameterization of physical invariants like the pressure-gradient relationship, rather than just a predictive emulator.

\looseness=-1\textbf{Explainable Machine Learning.} Recent efforts in scientific machine learning (SciML) have largely focused on the interpretability and explainability of data-driven models~\citep{roscher2020explainable}. These frameworks typically provide taxonomies for integrating domain knowledge—either as inductive biases in architecture or as post-hoc explanations of model decisions \citep{karniadakis2021physics,willard2020integrating}. However, ``explainability" in this context can be subjective and often lacks a formal guarantee of physical consistency across out-of-distribution regimes \citep{geirhos2020shortcut,rudin2019stop}.
Our work moves beyond these taxonomies by establishing formal \textit{epistemic criteria} for scientific usability. Instead of treating physics as an optional input, we introduce necessary conditions for \textit{Scientific Alignment} \pcref{sec:scientific-alignment} as a set of mathematical prerequisites~\pcref{tab:alignment-hierarchy}, provides a principled way to falsify representational hypotheses about physical and causal interpretability.

\looseness=-1 \textbf{Representation Alignment.} The field of representation alignment primarily focuses on the structural and functional similarities between the internal representations of artificial and biological systems~\citep{muttenthaler2022human, muttenthaler2025aligning}. More recently, this concept has expanded to facilitate latent representation communication~\citep{moschella2022relative, huh2024platonic, fumero2024latent} and model interoperability through merging and stitching~\citep{ainsworth2022git, csordas2020neural}. However, ``alignment" in these contexts is typically defined at a purely computational level—measuring the relative distance between different neural encodings—rather than the alignment between an encoding and the underlying domain physics. While existing methods ensure that two models ``speak the same language,", they do not guarantee that this language is grounded in scientific reality. In a similar spirit to LLM alignment (i.e., ensuring model outputs adhere to human intent \citep{ouyang2022training, gabriel2020artificial}), scientific alignment ensures that representations are constrained by physical laws. As we formalize in~\cref{sec:scientific-alignment}, this requires moving beyond model-to-model similarity toward a rigorous evaluation of whether a representation supports structural alignment probes \(Q\) and surrogate reasoning.
This transition from ``how models talk to each other" to ``how models talk to the world" provides a necessary prerequisite for the reliable employment of data-driven approaches in the natural sciences.

\textbf{Physical Common Sense.}
Our work relates to recent efforts on evaluating physical reasoning and
common-sense understanding in vision and video models.
A growing body of benchmarks~\citep{chow2025physbench,bear2021physion,zhang2025morpheus,bansal2024videophy,wang2024you,meng2024towards}
assesses whether model predictions or generations conform to intuitive
physical rules, such as object permanence, collision dynamics, and temporal
consistency.
These evaluations have provided important insights into perceptual robustness
and output-level generalization.
In contrast, we focus on a complementary question: whether the \emph{internal
representations} learned by vision models preserve the structural organization
of the underlying physical system.
While strong performance under perceptual or visually defined OOD shifts is
often interpreted as evidence of physical understanding, such results do not
necessarily constrain the geometry of the latent space itself.
Our work therefore emphasizes probing representation structure directly, rather
than inferring physical grounding solely from observable outputs.